\documentclass[11pt]{article}
\usepackage[letterpaper, margin=1in]{geometry}

\usepackage{adjustbox}
\usepackage{natbib}
\usepackage{graphicx}
\usepackage{subfigure}
\usepackage{caption}
\usepackage[utf8]{inputenc} % allow utf-8 input
\usepackage{amsmath}
\usepackage{amssymb}
\usepackage{mathtools}
\usepackage{amsthm}
\usepackage{txfonts}
\usepackage[lined,ruled]{algorithm2e}
\usepackage{algorithmic}
\usepackage{hyperref}

\usepackage{amsmath,amsfonts,bm}

\def\eqref#1{equation~\ref{#1}}
\def\1{\bm{1}}

\DeclareMathAlphabet{\mathsfit}{\encodingdefault}{\sfdefault}{m}{sl}
\SetMathAlphabet{\mathsfit}{bold}{\encodingdefault}{\sfdefault}{bx}{n}

\def\gE{{\mathcal{E}}}

\def\gH{{\mathcal{H}}}

\def\gV{{\mathcal{V}}}

\DeclareMathOperator*{\argmax}{arg\,max}
\DeclareMathOperator*{\argmin}{arg\,min}

\theoremstyle{plain}

\newtheorem{theorem}{Theorem}[section]
\newtheorem{example}{Example}[section]
\newtheorem{proposition}{Proposition}
\newtheorem{lemma}{Lemma}

\theoremstyle{definition}
\newtheorem{definition}{Definition}

\theoremstyle{remark}

\usepackage[T1]{fontenc}    % use 8-bit T1 fonts
\usepackage{hyperref}       % hyperlinks
\usepackage{url}            % simple URL typesetting
\usepackage{booktabs}       % professional-quality tables
\usepackage{amsfonts}       % blackboard math symbols
\usepackage{nicefrac}       % compact symbols for 1/2, etc.
\usepackage{microtype}
\usepackage{graphicx}
\usepackage{subfigure}
\usepackage{booktabs} % for professional tables

\usepackage{quiver}
\usepackage{hyperref}
\usepackage{lipsum}
\usepackage{fancyhdr}       % header
\usepackage{graphicx}       % graphics
\graphicspath{{media/}}     % organize your images and other figures under media/ folder
\usepackage{tikz}
\usetikzlibrary{fit,tikzmark}   
\usepackage{booktabs,cellspace}

\usepackage{xcolor}
\usepackage{colortbl}
\newcommand{\cfbox}[2]{%
    \colorlet{currentcolor}{.}%
    {\color{#1}%
    \fbox{\color{currentcolor}#2}}%
}
\newcommand{\red}[1]{\cfbox{red}{\color{red}\textbf{#1}}}
\newcommand{\gray}[1]{\cfbox{gray}{\color{gray}\textbf{#1}}}

\newcommand{\lightgrayarea}[1]{{\cellcolor{lightgray} \textbf{#1}}}

\title{Adversarial Label Invariant Graph Data Augmentations for Out-of-Distribution Generalization}

\author{Simon Zhang\thanks{Department of Computer Science, Purdue University, West Lafayette, USA} \qquad Ryan P. DeMilt\thanks{Department of Computer Science and Engineering, The Ohio State University, Columbus Ohio, USA} \qquad  Kun Jin\thanks{Department of Computer Science and Engineering, The Ohio State University, Columbus Ohio, USA} \qquad Cathy H. Xia\thanks{Department of Industrial and Systems Engineering, The Ohio State University, Columbus Ohio, USA}}
\date{}

\begin{document}
\maketitle

\begin{abstract}
Out-of-distribution (OoD) generalization occurs when representation learning encounters a distribution shift. This occurs frequently in practice when training and testing data come from different environments. Covariate shift is a type of distribution shift that occurs only in the input data, while the concept distribution stays invariant. 
We propose RIA - Regularization for Invariance with Adversarial training, a new method for OoD generalization under convariate shift. Motivated by an analogy to $Q$-learning, it performs an adversarial exploration for counterfactual data environments. 
These new environments are induced by \emph{adversarial label invariant data augmentations} that prevent a collapse to an in-distribution trained learner. It works with many existing OoD generalization methods for covariate shift that can be formulated as constrained optimization problems. 
%We devise a new training algorithm for OoD generalization that performs an adversarial search for training data environments that works with any existing OoD generalization method for covariate shift.
%or have assumptions about the data that aren't widespread in practice. 
  %We consider an adversarial data augmentation approach to obtain a minimax formulation of the OoD generalization problem. %This approach minimally changes the architecture.  %Training environments are often scarce and may only weakly satisfy the underlying data generation assumptions. Such cases are common for node classification datasets. %We call achieving extrapolation in such a case as OoD robustness. 
  %In this paper, in order to obtain OoD robustness under such conditions, we explore new adversarially data augmented environments with a policy gradient based method. 
  %We show a bound on the empirical adversarial error for graph classification. %adversarial data augmentation in the limit of an infinite number of graphs, we obtain a solution whose causal noise also lies near a high dimensional sphere. 
  We develop an alternating gradient descent-ascent algorithm to solve the problem in the context of causally generated graph data, and 
  perform extensive experiments %\footnote{\url{https://anonymous.4open.science/r/RIA-879B}} 
 on OoD graph classification for various kinds of synthetic and natural distribution shifts. We demonstrate that our method can achieve high accuracy compared with OoD baselines.
\end{abstract}

\section{Introduction}\label{sec: introduction}
The out-of-distribution (OoD) generalization problem is an important topic in machine learning \cite{li2022out, shen2021towards} where one attempts to extrapolate from training data to in-the-wild distribution shifted data. For example, in computer vision this is commonly demonstrated by the example of identifying cows vs. camels on green or sandy backgrounds \cite{beery2018recognition} or the colored MNIST example from \citep{arjovsky2019invariant}. Covariate shift is when the covariate, or input, distribution shifts while the concept distribution does not change. These varying data conditions are known as varying environments, which can be defined as data distributions conditioned on some varying environmental factors. A covariate shift is an example of a change in environment. 
%At training time we are given just a few, or possibly one, such environments. 
\textcolor{black}{Common approaches such as Empirical Risk Minimization (ERM), which selects a model with minimal loss over an average of the training environments, cannot generalize to OoD test data as} the training environment(s) often rarely reflect the testing environments. Thus OoD generalization requires specialized methods and assumptions beyond minimizing the loss over the training environment(s).

When there is covariate shift, \textcolor{black}{the distribution of input data shifts due to the change of environments}.
For various reasons, there may be a scarcity of training environments. It is common, in fact, to just have a few, or possibly one, training environment. Existing OoD generalization methods are based on the concept of achieving invariance, or stability amongst learners on various environments. Due to the lack of diverse training environments, there is a possibility of such a learner collapsing to an ERM solution.

Non-Euclidean data such as graphs offer new challenges to the problem of OoD generalization. The primary challenge is the variable structure of the graphs. The number of nodes of each graph is variable and the interconnection structure of a graph is represented by a $0$-$1$ matrix space different from the graph signal space of node attributes. It is particularly computationally expensive to handle the edges whose count grows quadratically in the number of nodes. %Furthermore, because of the interconnect structure's discrete nature, gradients can be difficult to compute if viewed as tensors in Euclidean space. 
Both tensors must be accounted for to define a graph. Furthermore, graphs have the permutation invariance inductive bias.%Thus at the data level, one must take into account the node index symmetries such as equivariance of data transformations. One must also acknowledge that the data is represented in very high dimensional space. %For node classification in particular, each node's label must be predicted individually using the interconnected structure of the graph. We can thus assume that the underlying data generation process for the graph may no longer involve node-wise independence and identically distributed behavior \cite{wu2022handling}. Furthermore, the environments available at training time may be limited. For example, the environments may depend on data collected over a specific large time interval and rare location with the data presented only as a few large graphs \cite{hu2020open, traud2012social}. 

%Furthermore, in \cite{wu2022handling} an adversarial approach to learning invariance for node classification is taken. These existing approaches lack node attribute level modifications to the graph and in particular lack being able to generate environments in the form of interventional distributions \cite{wu2022discovering} that do not come directly from the data.

%The SEM introduced in \cite{wang2022out} assumes a common concept distribution (conditional distribution of label given covariates) and shows that using certain kinds of data augmentations are enough to guarantee the existence of a solution to OoD generalization under covariate shift. This approach, however, does not model noise in the causal map and is unexplainable. To address this potential issue, we obtain a more robust OoD generalization by extrapolating with training environments from an adversarially trained distribution. %Where by robust generalization, we mean that scarce training data not entirely satisfying the data generation assumptions can be extrapolated from.

We will assume a common concept distribution across environments and only covariate shift exists between training and testing distributions. Existing OoD solution methods do not prevent the collapse to an ERM solution during training due to a lack of diverse training environments.
We design an algorithm to search, using alternating gradient descent-ascent, 
for counterfactually generated environments that are hard to learn. This adversarial search prevents collapse to an ERM solution by introducing difficult and diverse environments. %Our reward function is based on VRex \cite{krueger2021out, xie2020risk} and thus an approximation to MM-Rex. MM-Rex is known geometrically to extrapolate environments to outside the convex hull of the training environments. 
%The predictions vary across the explored environments under label preserving data augmentations. We show that as long as the adversarially augmented training environments satisfy our data generation assumptions, then we can still extrapolate to a solution of the OoD generalization problem. %Unlike in previous approaches the solution is obtained without changing the architecture of the GNN, without costly dense adjacency matrix gradients~\cite{wu2022handling}, or generating artificial environments that may go out-of-distribution~\cite{wu2022handling}.

%We present an approach based on causal invariance whereby attribute-level data augmentation is applied to the graph data. Conditioning on a mixture of these data augmentations with the training environments, we apply structural interventions at the representation level to learn an edge mask that stabilizes the representation across environments, similar to as in \cite{wu2022discovering}. 

The contributions of this paper are as follows:
\begin{enumerate}
%\item We identify a crucial component in the data generation process for the concept distribution under covariate shift: causal noise. We model causal noise by a noisy map in a structural equational model. We assume a generating map for all these noisy maps. The causal noise is quantified by the gap between that generating map and all other causal maps, called the causal gap.% for the concept conditional distribution for graphs. 
\item We formulate a causal data generation process for graphs. This model separates spurious and causal factors that determine the graph label.
\item We identify a common issue with many existing OoD solutions, namely when there is a ``collapse", or fitting, to the ERM solution. We briefly discuss this phenomenon in the context of our graph data model.  
\item We formulate what an adversarial label invariant data augmentation is and the counterfactual training distribution it can generate.
%\item We show bounds for the gap between the OoD risk, the ERM risk and any other risk from the training and show that adversarial data augmentations have sufficient power to solve the OoD generalization problem.
\item We introduce RIA: Regularization for Invariance with Adversarial training, a black-box defense to learn more environments for improved OoD generalization. The approach simulates counterfactual test environments in the form of a black-box evasion attack. This is motivated by an analogy to $Q$-learning.
%\item %We introduce graph adversarial data augmentation to resolve causal noise%, represented by a noisy causal map in the SEM
%We show under sufficient smoothness assumptions and using adversarial data augmentation that an OoD solution under covariate shift due to the stochasticity from the causal noise distribution can be approximated uniformly at a rate of $O(\frac{1}{\sqrt{n}})$ in the limit of infinite samples to % with probability $1-\theta$ for $\theta>0$ to $O(r+\frac{log(\frac{1}{\theta})}{\sqrt{n}})$, for $n$  the amount of data samples and 
%$O(r)$ where $r$ bounds the causal gap. 
    
%\item We perform extensive experiments demonstrating the wide ranging effectiveness of our approach against recent state of the art node classification OoD generalization approaches, including RICE \cite{wang2022out} and EERM \cite{wu2022handling}.
\item We perform extensive experiments to demonstrate the effective OoD generalizability of our method on real world as well as synthetic datasets by comparing with existing %and representative 
graph OoD generalization approaches. %including RICE~\cite{wang2022out} and EERM~\cite{wu2022handling} as well as in-distribution datasets. %We obtain up to 2.7\% accuracy improvement on ogb-arxiv and elliptic in the inductive setting 
\end{enumerate}

\section{Related Work} A common approach to tackling the OoD problem is to find a representation that performs stably across multiple environments~\cite{arjovsky2019invariant, bagnell2005robust,ben2009robust,  chang2020invariant,duchi2016statistics, krueger2021out, liu2021heterogeneous, mahajan2021domain, mitrovic2020representation,sinha2017certifying}. The goal of such an approach is to eliminate spurious or shortcut correlations that would normally be learned through empirical risk minimization (ERM). ERM is the common approach taken in machine learning to minimize the training error over a union of training environments in order to achieve well known generalization bounds \cite{NIPS1991_ff4d5fbb}. For graph data, \cite{wu2022discovering}  assume an underlying data generation process, then their assumptions provide a guarantee \cite{xie2020risk} that they can learn a representation that is stable across environments. In their data generation assumptions, they assume graph data can be decomposed into causal and spurious parts. By learning stably across environments, their objective is to learn to ignore the spurious parts of the data.  

Adversarial training~\cite{croce2020robustbench,szegedy2013intriguing, goodfellow2014explaining,barreno2006can,kearns1988learning} is when a model is trained with adversarial examples. Adversarial examples~\cite{goodfellow2014explaining} are perturbations of the original data which change the output of a learner. 
When the adversarial examples are used to fool the learner~\cite{goodfellow2014explaining,moosavi2016deepfool,carlini2017towards}, this is called an adversarial attack. When the attack is on the testing data, this is called an evasion attack~\cite{biggio2013evasion}. Adversarial training is a defense to these kinds of attacks. 

Non-Euclidean data such as graphs offer new challenges to the OoD problem. Many of the existing works on this topic are explained in the survey \cite{li2022out}.

\section{Causal Data Generation Process}
\label{sec: causalmodel}
It is common for data to be generated through causality, or cause and effect relationships. We define structural causal models (SCM), which model these causal relationships in the data distribution. Underlying any SCM is a combinatorial object called a directed acyclic graph (DAG), whose edges can be used to model cause and effect. 
\begin{definition}
    A Directed Acyclic Graph (DAG) is a directed graph $G=(V,E), E \subseteq V \times V$ for $V= [n]= \{1,...,n\}$ where any directed path of nodes $(v_1,...,v_k)$ with $(v_i,v_{i+1}) \in E$ for $i=1,...,k-1$ cannot have $v_1=v_k$
\end{definition}
Consider a joint distribution $P(V_1,...,V_n)$ over random variables $\gV= \{V_i\}_{i=1}^n$. A random variable $V_i$ is observable if it can be sampled from $P(V_1,...,V_n)$ and hidden if it cannot be.  
\begin{definition}
Given a DAG $G=(V=[n],E)$, define a structural causal model (SCM) $\mathcal{M}$ on $G$ as the following tuple: $(\mathcal{V}, \mathcal{F}, \mathcal{U})$ where $[n]$ indexes $\mathcal{V}$ and $\mathcal{U}$, meaning we can index every $V \in \mathcal{V}$ as $V=V_i$ for some $i \in [n]$ where $V_i\neq V_j$ if $i\neq j$ and similarly for $\mathcal{U}$.
The set $\mathcal{V}$ is a set of endogenous random variables. The set $\mathcal{U}$ is a set of exogenous random variables, each being i.i.d. uniform random variable in $[0,1]$. Each endogenous variable $V_i$ has a set of parents $V_{pa_i} \triangleq \{ V_j : (j,i) \in E\}$. If $pa_i$ is nonempty, we have the relationship:
\begin{equation}
    V_i = f_i(V_{pa_i}, U_i)
\end{equation}
where $f_i \in \mathcal{F}$ and $U_i \in \mathcal{U}$.

If $U_1 \perp U_2 \perp ... \perp U_n$ (joint independence), then the SCM is called Markovian. 
\end{definition}

For a Markovian SCM the joint distribution can be factored into conditional distributions for each endogenous variable~\cite{pearl2009causal}:
\begin{equation}
    P(V_1,...,V_n)= \Pi_{i=1}^n P(V_i \mid V_{pa_i}),
\end{equation}
where
$P(V_i \mid  V_{pa_i})
=P(V_i)$ if $V_{pa_i}=\emptyset$. 
%\textcolor{red}{(Why is Markovian needed? How do you know your graph is Markovian? Again, this seems stylized)}

\iffalse
In the formulation of our approach, there is no dependency on a particular OoD loss besides that it can be decomposed into the form $ERM(\mathcal{E}_{tr})+\textbf{OoD-Reg}_{\bullet}(\mathcal{E}_{tr})$ on training environment set $\mathcal{E}_{tr}$. Many existing methods require causal assumptions on the data generation process.  %It is often the case, however, that the causal assumptions are satisfied but the environments are not expressive enough for an environment invariant predictor to extrapolate OoD. %Within this regime or assuming a distribution shift induced by RIA where causal assumptions may become true,
\fi
%For graph data, we discuss a particular causal model that we use in subsequent guarantees. It is based on the causal model presented in \cite{arjovsky2019invariant}. RIA can address when these causal assumptions fail by adversarially augmenting the data so that there are diverse counterfactually generated OoD environments. To actually extrapolate to OoD data \emph{as intended} by existing methods, RIA would have to learn a distribution shift that can push the data towards these causal assumptions. Empirically, we find that preventing ERM collapse allows for OoD generalization even without a guarantee that the intended causal assumptions are satisfied.
\begin{figure}[t]
\centering
\includegraphics[width=0.8\columnwidth]{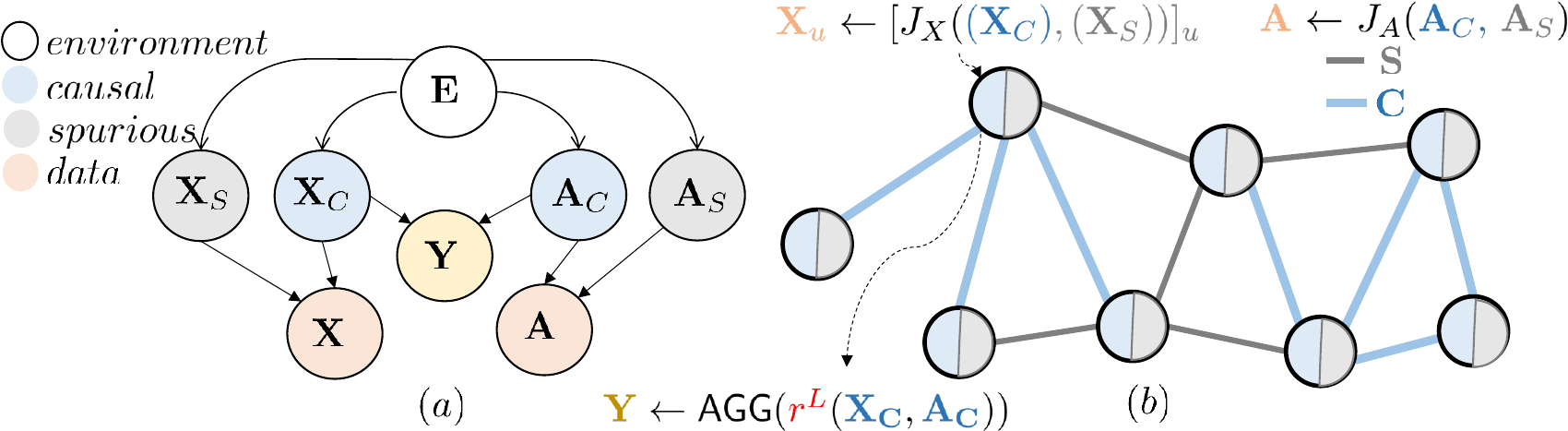}
\caption{(a): A casual graph for the data generation process. The exogenous variable $\textbf{E}$ is an integer that indexes a data environment. (b) A labeled attributed graph instance with the joining operation for causal/spurious attributes and edges shown. In the figure, the joining operation $J_X$ is shown as the node-wise concatenation of causal and spurious attribute tensors. The joining operation $J_A$ shown in the figure provides the sum of the adjacency matrices $\mathbf{A}_C$ and $\mathbf{A}_S$ where the hadamard product $\mathbf{A}_C \odot \mathbf{A}_S=\mathbf{0}$. The half grey color on nodes represents the $\mathbf{X_S}$ while the half blue color represents the $\mathbf{X}_C$. }
		\label{fig: SEM}
		\centering
	\end{figure}
\subsection{A Structural Causal Model for Graphs}
We will be using a specific data generation process to model the graph data distribution. It is based on the generic causal model presented in \cite{arjovsky2019invariant}. We define the following random variables: $\gV= \{\mathbf{E}, \mathbf{X}_C, \mathbf{X}_S, \mathbf{A}_C, \mathbf{A}_S,\mathbf{X},\mathbf{A},\mathbf{Y} \}$. 

The causal relationships between these variables are shown graphically by the directed edges in the DAG of Figure \ref{fig: SEM}.

The variable $\mathbf{E}$ is the exogenous environmental variable. It takes values from a finite set $\gE_{all}$. The environments determine various graph parameters. These can include:
\begin{enumerate}
    \item Having certain causal OR spurious properties of the graph topology such as e.g. treewidth, forbidden graph minors, isomorphism classes, spectral distributions etc. 
    \item AND Having certain causal OR spurious properties on the signal at the nodes: e.g. inherent embedding dimension, large magnitude moments, long tails, fat tails, pairwise correlation etc. 
\end{enumerate}
To generate a graph, it is necessary to have two tensor representations: a node attribute tensor and an adjacency matrix. The two tensor representations: $\mathbf{X}_C,\mathbf{A}_C$ are causal and are for the same number of nodes $n_e$. The two tensor representations: $\mathbf{X}_S,\mathbf{A}_S$ are spurious and are for the same number of nodes $n_e$. These two graphs are ``attached" through the tensor representations in the causal model. The attachment process is determined by the two deterministic concatenation maps $J_X, J_A$, see Figure \ref{fig: SEM}.
\iffalse
\begin{enumerate}
    \item The node attribute tensor is defined as follows:
\begin{equation}
    \mathbf{X}= J_X(\mathbf{X}_C, \mathbf{X_S})
\end{equation}
where $J_X$ is a deterministic column-wise concatenation map. 

\item The node attribute tensor is defined as follows:
\begin{equation}
    \mathbf{A}= J_A(\mathbf{A}_C, \mathbf{A}_S)
\end{equation}
where $J_A$ is a deterministic addition map. We assume that there is no agreement between $\mathbf{A}_C$ and $\mathbf{A}_S$ on the nonzeros.
\end{enumerate}
\fi

This graph generation process can generate any graph with less than or equal to $N_e$ number of nodes. The nodes that are not generated are given a ``NULL" node attribute. 

The ground truth label $\mathbf{Y}$ is generated by the following deterministic composition, see ~\cite{hamilton2017inductive}: 
\begin{equation}
    \mathbf{Y}=\textsf{AGG}(r^L(\mathbf{X}_C,\mathbf{A}_C))
\end{equation}
\begin{enumerate}
    \item The map  $r^L$ takes the pair of causal tensors and outputs a tensor of node representations. This tensor is indexed row-wise by the $N_e$ nodes.  At each node there is the composition of an $L$-hop local neighborhood recursive expansion map over a deterministic map $m: \mathbb{R}^d \rightarrow \mathbb{R}^d$: 
\begin{subequations}
\begin{equation}
    r^L(\mathbf{X}_C,\mathbf{A}_C)[v]\gets s^L(v)
    \end{equation}
\begin{equation}
s^L(v)\gets m(\mathbf{X}_C[v]+\sum_{ u \in \text{Nbd}(v)}s^{L-1}(u))
\end{equation}
\begin{equation}
    s^0(v)\gets m(\mathbf{X}_C[v])
\end{equation}
\end{subequations}
where $\diamond[\bullet]$ denotes the $\bullet$-th node row vector of tensor $\diamond$.
\item The  map $\textsf{AGG}: 2^{\mathbb{R}^D} \rightarrow \{0,1\}$ is a row-wise set map to the booleans $\{0,1\}$ over the tensor $r^L(\mathbf{X}_C,\mathbf{A}_C)$.
\end{enumerate}
%The graph $\mathbf{C}$ is called a causal graph. 
%That is, the causal variable $\mathbf{X}_C$ satisfies $\mathbf{Y}^e \Perp_{\mathbf{C}^e} \mathbf{X}^e$ and determines the label $\mathbf{Y}^e$ by the deterministic map $y$ up to some randomness $\eta \sim U[0,1]$. 

%The map $y$ is a deterministic map, it must determine the concept distribution $P(\mathbf{Y}\mid \mathbf{X}_C,\mathbf{A}_C)$~\cite{kallenberg1997foundations} (Proposition 8.20).

\textbf{Generating Graph Data: }
The data generation process proceeds from the exogenous environment variable through the chain of children over the SCM. The causal chains end on the observable covariate and label variables. All other variables are hidden.
\begin{enumerate}
    \item From the environmental variable $\mathbf{E}$ taking on environment $e \in \mathcal{E}_{all}$, two conditionally independent causal and a spurious graphs are randomly generated. 
    \item These two graphs are ``attached" to form the covariate data. For environment $e$, we denote the tensor representation as: $\mathbf{G}^e:=(\mathbf{X}^e,\mathbf{A}^e)$. 
    \item The causal graph is passed through a deterministic recursive neighborhood expansion map. For environment $e$, this produces a label $\mathbf{Y}^e$. 
    \item The covariate data and the label are paired to form the observable data: $(\mathbf{G}^e,\mathbf{Y}^e)$. We denote this distribution by $P^e$
\end{enumerate}
\section{The Out-of-Distribution Generalization Problem}
We will assume that there are in total only a finite number of environments. 
We also assume that there is a shift in the covariate distribution for testing different from the training distribution. The out-of-distribution generalization problem seeks to predict a label on any unseen testing distribution. Since we do not know the testing distribution(s), we optimize for worst case data distributions in the following minimax optimization problem, called the OoD generalization problem.  %over hypothesis class $\tilde{\mathfrak{H}}$ and %called the out-of-distribution generalization problem:
\begin{equation}\label{eq: SEMoptproblem}
    \textsf{OoD}(\mathcal{E}_{all})\triangleq\min_{{h} \in \gH}%\tilde{h}\in \tilde{\mathfrak{H}}} %\sup_{Q: W_{\infty}(Q_{(X,A)},P^{tr}_{(X,A)})<D_{\mathbf{\xi}}, (Q,P^{tr}_{(X,A)} \in \mathbb{P}_{r_1,r_2})}
    %\sup_{F: W_{\infty}(F,E)<D_{E}}
    %\sup_{\mathcal{A}} %supp(\mathcal{A})=supp(\Xi)}
    \sup_{e \in \mathcal{E}_{all}} 
    %\sup_{Q*F: W_{\infty}(Q_{(X,A)}*F,P^{}_{(X,A)}) )<D, (Q \in \mathbb{P}_{r_1,r_2},P \in \mathbb{P}_{r_1,r_2})} 
    R^{e}(h)
    %\mathbb{E}_{(\mathbf{G}^e,\mathbf{Y}^e) \sim P^e}[l_e({h}(\mathbf{G}^e),\mathbf{Y}^e)]
\end{equation}
where $\gH$ is a hypothesis space of boolean functions over graphs called learners. %In particular, $\gH$ adopts the addition and scalar multiplication operations from $\mathbb{R}^D$,
Let the risk of a learner $h \in \gH$ over an environment $e$ be defined as:
\begin{equation}
    R^e(h)\triangleq \mathbb{E}_{(\mathbf{G}^e,\mathbf{Y}^e) \sim P^e}[l({h}(\mathbf{G}^e),\mathbf{Y}^e)]
\end{equation}
%where $e$ indexes a specific environment,
The distribution $P^e$ is caused by the environment $e$. It is over the data $(\mathbf{G}^e,\mathbf{Y}^e)$, and {$h(\cdot)$ is a learner to predict ground truth target label $\mathbf{Y}^e$ from $\mathbf{G}^e$.
\begin{definition}  \label{def: environment}
Denote $\mathcal{E}_{all}$ the set of all environment indices that index all data distributions for some classification task that we want to learn. 
Let $\mathcal{E}_{tr} \subsetneq \mathcal{E}_{all}$ be a strict subset of training environments that are accessible during training. 
%\simon{and in particular, representable on a computer.}
\end{definition}
$\textbf{ERM: }$ 
When there is no distribution shift at all, the standard approach would be to take $\mathcal{E}_{tr}$, and minimize the average risk over these training environments. This is known as Empirical Risk Minimization (ERM), which is given in the following equation:
\begin{equation}\label{eq: ERMoptproblem}
    \textsf{ERM}(\mathcal{E}_{tr})\triangleq \min_{h}%\tilde{h} \in \tilde{\mathfrak{H}}} %\sup_{Q: W_{\infty}(Q_{(X,A)},P^{tr}_{(X,A)})<D_{\mathbf{\xi}}, (Q,P^{tr}_{(X,A)} \in \mathbb{P}_{r_1,r_2})}
    %\sup_{F: W_{\infty}(F,E)<D_{E}}
    %\sup_{Q \in \mathbb{P}_{r_1,r_2}} 
    %\sup_{Q*F: W_{\infty}(Q_{(X,A)}*F,P^{}_{(X,A)}) )<D, (Q \in \mathbb{P}_{r_1,r_2},P \in \mathbb{P}_{r_1,r_2})} 
     \frac{1}{\lvert \mathcal{E}_{tr}\rvert }\sum_{e \in \mathcal{E}_{tr}}R^e(h)%\mathbb{E}[l_e({h}(\mathbf{G}^e),\mathbf{Y}^e)]
\end{equation}
Let $h_{ERM}$ denote the minimizer to the ERM equation (e.g. zero risk). Standard generalization bounds for in-distribution testing data are known for ERM~\cite{vapnik1991principles}. However, these generalization bounds are invalid when there is a distribution shift of $P^e$ from training environments with $e \in \mathcal{E}_{tr}$ to testing distributions with  $e \in \mathcal{E}_{all}$~\cite{ahuja2021invariance}.

\textbf{IRM: }(~\cite{arjovsky2019invariant})  This is a bi-level optimization problem that learns 1. a single data embedding and 2. a downstream boolean predictor that minimizes jointly across all environments.
\begin{subequations}
\begin{equation}
    \min_{\Phi:  \mathcal{G} \rightarrow V,  w: V \rightarrow \{0,1\}} \sum_{e \in \mathcal{E}_{tr}} R^e(w \circ \Phi)
\end{equation}
\begin{equation}
    \text{ s.t. }w \in \argmin_{\bar{w}: V \rightarrow \{0,1\}} R^e(\bar{w}\circ \Phi), \forall e \in \mathcal{E}_{tr}
\end{equation}
\end{subequations}
\subsection{ERM Collapse}
When training over the training environments a common phenonenom called ERM collapse may occur, namely that the learner $h^*$ determined by a learning algorithm $\mathcal{A}: \Pi_{e \in \mathcal{E}_{tr}}{D}_{e} \rightarrow \mathcal{H}$ over the data sample sets of size $m$, $D_e \sim (P^e)^m: e \in \mathcal{E}_{tr}$ converges to the ERM solution: $h_{ERM} \in \mathcal{H}$.

In the context of out-of-distribution generalization and a learning algorithm that attempts to minimize each environmental risk, this can occur for \textbf{some} of the following reasons:
\begin{enumerate}
    %\item (No Disagreeing Samples) For any pair of environments $e,e' \in \mathcal{E}_{tr}$ and data sample pairs $(G^e,Y^e) \sim P^e,(G^{e'},Y^{e'}) \sim P^{e'}$: if  $G^e\cong G^{e'}$ (Isomorphic Graphs), then $Y^e=Y^{e'}$. 
    \item (Few Samples) There are very few training samples, none repeating, resulting in overfitting.
    \begin{enumerate}
        \item e.g. Learning on a single data sample%~\cite{zhang2024expressive}.
        \item e.g. A single data sample from one of three separate environments with common support.
    \end{enumerate}
     \item (Single Environment) There is only one training environment, making $h_{ERM}$ a feasible solution to converge to.
    \item (Zero Risk) The risk over all of $\mathcal{E}_{tr}$ is zero, making $h_{ERM}$ a feasible solution.
\end{enumerate}\label{prop: ERM-collapse-sufficient}
These properties are related in the following way:
\begin{proposition}(Properties of Sufficient Conditions for ERM collapse)

Assume the causal model of Figure \ref{fig: SEM} and assume $h_{ERM} \in \mathcal{H}$.

When all distributions $P^e, e \in \mathcal{E}_{tr}$ have common support:
\begin{enumerate}
    \item Case 1 (Few Samples)  implies Case 2 (Single Environment).
    \item Case 2 (Single Environment) implies Case 3 (Zero Risk).
\end{enumerate}
\end{proposition}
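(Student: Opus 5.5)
Both implications are informal, so the first step is to fix precise readings. Case 1 (Few Samples) means each $D_e$ is tiny, possibly a single sample, and no graph is repeated across the pooled sample. Case 2 (Single Environment) means the learner cannot distinguish the training environments, so the data behaves as if drawn from one distribution. Case 3 (Zero Risk) means $\min_h \frac{1}{|\mathcal{E}_{tr}|}\sum_{e}R^e(h)=0$, with the risks taken empirically over the samples $D_e$, and this minimum is attained in $\mathcal{H}$. Throughout I use two facts: the common-support hypothesis, and the fact from the causal model of Figure~\ref{fig: SEM} that $\mathbf{Y}=\textsf{AGG}(r^L(\mathbf{X}_C,\mathbf{A}_C))$ is a deterministic function of the causal part, with the same concept map in every environment.

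For Case 1 $\Rightarrow$ Case 2, the plan is as follows.
\begin{enumerate}
\item With few non-repeating samples, every observed graph $G$ appears in exactly one $D_e$.
\item Under common support, that sample's environment label carries no information that separates the environments' supports. So the empirical objective over $\Pi_{e}D_e$ is the same as the objective over the single pooled environment $\bigcup_e D_e$, which is again a distribution with that common support.
\item Consequently, any algorithm $\mathcal{A}$ that is a function of the per-environment empirical risks sees an input that could equally have come from one environment.
\item In particular, the per-environment constraints, such as the IRM inner $\argmin$, carry no information beyond pooled ERM.
\end{enumerate}
This gives Case 2. To make it concrete, I would identify the product of empirical measures with a single empirical measure on the union.

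For Case 2 $\Rightarrow$ Case 3, the plan is as follows.
\begin{enumerate}
\item Take the single (effective) environment $e$.
\item Because $\mathbf{Y}^e$ is a deterministic function of $(\mathbf{X}_C,\mathbf{A}_C)$, and hence of $\mathbf{G}^e$ (once the spurious part is fixed or ignored), the concept $P(\mathbf{Y}\mid\mathbf{G})$ is degenerate.
\item So there exists a boolean function $h^\star$ with $h^\star(\mathbf{G}^e)=\mathbf{Y}^e$ almost surely, giving $R^e(h^\star)=0$.
\item Given $h_{ERM}\in\mathcal{H}$ and assuming $\mathcal{H}$ is rich enough that the minimum is attained (e.g., $h^\star\in\mathcal{H}$), the ERM minimum is $0$.
\item In the empirical setting this follows even more directly: on finitely many distinct graphs, and using the no-repetition property inherited from Case 1, a consistent labeling always exists.
\end{enumerate}

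The main obstacle is the first implication. It only holds if "implies" is read as "is indistinguishable, from the learner's perspective, from." I would state that reading explicitly as a convention, since the environment labels are formally still present.

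A secondary subtlety is the determinism of $\mathbf{Y}$ given $\mathbf{G}$. The joining maps $J_X$ and $J_A$ must allow $(\mathbf{X}_C,\mathbf{A}_C)$ to be recovered from $(\mathbf{X},\mathbf{A})$. Concatenation and disjoint-support addition make this plausible. Where it fails, I would fall back on the finite-sample consistent-labeling argument, which needs no recoverability.
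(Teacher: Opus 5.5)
Your proposal follows essentially the paper's route. For Case 1 $\Rightarrow$ Case 2 the paper also pools the samples into $D=\bigcup_{e}D_e$ and treats the uniform distribution on $D$ as a new single environment $\tilde e$, justified by the labeling $G\mapsto Y$ being well defined on the pooled sample; for Case 2 $\Rightarrow$ Case 3 it uses $h_{ERM}\in\mathcal{H}$ to get zero risk once there is no competing environment, which your determinism and consistent-labeling argument makes more explicit. One caution: the paper's reading of Case 2 only needs $D$ to form one environment, so your extra claim that per-environment constraints such as IRM's ``carry no information beyond pooled ERM'' is unnecessary and false as stated (the partition $\{D_e\}$ still enters such penalties), and should be dropped rather than adopted as a convention.
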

\begin{proof}
    \textbf{1. } When there are few samples:
    \begin{equation}
        D:=\bigcup_{e \in \gE_{tr}}D_e: D_e=\{s_e: s_e \sim P^{e}\}, \lvert D_e\rvert \ll \infty, 
    \end{equation} 
    We claim that $D$ forms an environment of its own. 
    
    The assignment $f: G\mapsto Y$ over all pairs $(G,Y) \in D$ in the data samples is well defined. Thus, there is a single consistent decision boundary over the samples. Thus, we can form the environment $\tilde{e}$ for the uniform distribution of all samples $(G,Y)$ over all the training environments. The new environment behaves as follows in the causal model:
    \begin{equation}
       \begin{tikzcd}
	& {E=\tilde{e}} & \\
	\emptyset & {\mathbf{A}} & {\mathbf{X}} \\
	{\mathbf{A}} & {\mathbf{X}} & {\mathbf{Y}}
	\arrow[from=1-2, to=2-1]
	\arrow[from=1-2, to=2-2]
	\arrow[from=1-2, to=2-3]
	\arrow["{\textsf{id}}"', from=2-2, to=3-1]
	\arrow[from=2-2, to=3-3]
	\arrow["{\textsf{id}}"'{pos=0.8}, from=2-3, to=3-2]
	\arrow[from=2-3, to=3-3]
\end{tikzcd}
    \end{equation}
The spurious graph tensors do not show up in environment $\tilde{e}$ and the function $f$ is represented by the relationship between the collider $\mathbf{Y}$ and its two parents.    

    \textbf{2. } If there is only a single environment, then there is no competing environment to prevent zero risk. Thus, since $h_{ERM} \in \mathcal{H}$, risk minimization over this only environment must result in zero risk.  
\end{proof}
\subsubsection{A Simple Example for Graphs}
In the context of our SCM graph data generation process, we give a very simple example of ERM collapse for the IRM learning algorithm:
\begin{example}
    Consider the following two environments:
    \begin{enumerate}
        \item $(e=1)$ A complete graph which has a decomposition into a causal  spanning tree with  its remaining spurious edges. All nodes have a constant signal $0$.
        \item $(e=2)$ A graph consisting of both causal and spurious undirected paths  of even number of nodes with signal $1$ at all nodes.
    \end{enumerate}
    Let $m: \mathbb{R} \rightarrow \mathbb{R}$ be the map $m(x):=x-1$ and let $L=1$.
    
    The ground truth label is predicted as for either environment:
    \begin{equation}
        \mathbf{Y}= \mathbf{1}_{\text{odd}}[\max_{v \in V(\mathbf{G}^e)}(\{\textsf{deg}(v): v \in V(\mathbf{G}^e)\})]
    \end{equation}
    which checks the parity of the maximum degree node and outputs 1 when the maximum degree of a node in $\mathbf{G}^e$ is odd.
    \begin{itemize}
        \item IRM with $w=1$ (IRMv1)~\cite{arjovsky2019invariant} will learn: $\Phi^*(G)={0}; G \sim \mathbf{G}^e$.
    \end{itemize}
    This achives zero risk for both environments, thus by Proposition \ref{prop: ERM-collapse-sufficient} we have ERM collapse. 

    This solution happens to not be the ground truth predictor, which would recognize that the spanning tree in environment one can have odd degree nodes.
\end{example}
\subsection{Adversarial Label Invariant Data Augmentations}
%In Section \ref{sec: motivation} we motivated finding a solution to the OoD generalization by data augmentation for a general case with the strong assumption of equality between minimax and maximin. In Section \ref{sec: ood-by-advaug} we loosened these assumptions. Corollary \ref{cor: OoD-by-advaug} says that the risk of the hardest environment for any learner $h$ can be simulated by some adversarial data augmentation over the training data we consider a method that augments any existing OoD generalization method.
We design a training algorithm for OoD generalization that adversarially explores data points by data augmentation for extrapolation beyond the training environments for OoD generalization. We focus on graph data, however our method can be generalized to any kind of data. The exploration is done by stochastic gradient ascent updates, adversarially maximizing against the ERM loss of any regularized OoD loss to search over environments \cite{yi2021improved}. The updates alternately minimizes the learner $h$ and data augmentations $\mathbf{a}$ for the $h$. 

In order to not violate the causality of our data generation process, the augmentations should \emph{not affect the map from causal graph to label}, see Figure \ref{fig: SEM}. The covariate graph data and the label share the causal graph variable as their common confounder. If an intervention on the covariates changes the ground truth label, then the learner would not know since the causal graph variable is hidden. Thus, we restrict our data augmentations to not change the label. Such data augmentations are called label invariant data augmentations:
\begin{definition}(Label Invariant Data Augmentation)

For covariate distribution $P$ and ground truth labeling function $f$, a label invariant data augmentation for $h$ is the following map:
\begin{equation}
    a: \textsf{supp}(P)\rightarrow \textsf{supp}(P) \text{ s.t. } f(a(X))=f(X)
    \end{equation}
\end{definition}
%A label invariant data augmentation is the most basic kind of data augmentation. 
A label invariant data augmentation only affects the ground truth label. In the data generation setting of~\cite{wang2022out}, it can be shown that causally invariant transformations are label invariant. Their setting requires a collapsed posterior.

In the case of our data generation process for graphs, data augmentations that only affect the spurious subgraph of an input graph $\mathbf{G}$ cannot change the ground truth label function. Thus such data augmentations are label invariant.  %and subsumes two related data augmentations: counterfactually invariant data augmentations and adversarial data augmentations. Both of these data augmentations involve the learner instead of the ground truth. 

A related data augmentation involves changing the output of the learner.
\iffalse
A counterfactually invariant data augmentation involves changing the environment: ``counter to the fact" through a data augmentation. This does not change the label of the learner:
\begin{definition}
(Counterfactually invariant data augmentation)

 Let $h$ be a learner and $e,e': e\neq e'$ be two environments where $\textsf{supp}(P^e)=\textsf{supp}(P^{e'})$. The following map is a counterfactually invariant data augmentation: 
\begin{equation}
    a: \textsf{supp}(P^e)\rightarrow \textsf{supp}(P^e) \text{ s.t. }
\end{equation}
\begin{equation}
    (P^e)_{\#}(a)=P^{e'} \text{ and }h(a(X))=h(X)
\end{equation}
\end{definition}
\fi
These are called adversarial data augmentations.
\begin{definition}
(Adversarial data augmentation) \cite{goodfellow2014explaining}

 Let $h$ be a learner and covariate distribution $P$,
\begin{equation}
    a: \textsf{supp}(P)\rightarrow \textsf{supp}(P) \text{ s.t. } h(a(X))\neq h(X)
\end{equation}
\end{definition}
We say a data augmentation is an \textbf{adversarial  label invariant data augmentation} if it is an adversarial data augmentation that is  label invariant.
%In the context of causal modeling, this is equivalent to causally invariant data augmentations~\cite{wang2022out}.
\section{Method}
We design the following method that interleaves exploration (stochastic gradient ascent) and exploitation (stochastic gradient descent) in order to extrapolate beyond the training data.
The exploration phase is motivated by Q-Learning \cite{watkins1992q}. This is a reinforcement learning method where an agent seeks to maximizes an expected reward. The agent takes a sequence of actions and collects rewards after each action. 

In Q-Learning, there is a Markov Decision Process (MDP) $\mathcal{M}=(\mathcal{S},\mathcal{A},p_t,p_r)$ consisting of a set of states, a set of actions that connect a state to a next state, a transition probability $p_t(s \xrightarrow[]{a} t)= P(t\mid s,a)$ for $s,t \in \mathcal{S}, a \in \mathcal{A}$ and a reward probability $p_r(r\mid s,a)$. If starting at $s$ there is an optimal expected reward, or \textbf{value} at $s$:  $V^*(s)$, then we define $Q^*(s,a)$ to be the expected reward when taking action $a$ starting at state $s$. In $Q$-learning, the agent computes a $Q(s,a)$ function over states and actions that estimates this optimal $Q^*$ function. The estimator can be learned through temporal updates. This is a dynamic programming recurrence called the Bellman-Equation \cite{watkins1992q}:
\begin{equation}
  Q_n(s,a)\gets  (1-\alpha)Q_{n-1}(s,a)+\alpha(r_{n}(s,a)+\gamma \max_{a' \in \mathcal{A}}Q_{n-1}(t,a')) \text{ where }p_t(s \xrightarrow[]{a} t)>0 
\end{equation}
where $n$ is the episode number.

Our method will use Q-Learning as an analogy for its explorative adversarially label invariant data augmentations.
\subsection{Relating Risk and Reward}
Consider the following ``analogy" conditioned over an environment $e$ between a MDP and deep learning:
\begin{enumerate}
    \item (States $\Longleftrightarrow$ Learners ): The set of states are in analogy with the set of learners. 
    \item (Actions $\Longleftrightarrow$ Weights $w \in W: \mathbb{A}_{w,e}$ ): 
    
    The weights $w \in W$ parameterize a distribution of \emph{label invariant} data augmentations. Let $\mathbb{A}_{w,e}$ be this distribution. Assume that the weight space $W$ is compact.
        \end{enumerate}
        By forming this analogy, the learners obtained through gradient updates correspond to states updated through actions. This lets us view the graph learning problem over a changing learner as a $Q$-learning problem.

    Continuing with the analogy, we relate the reinforcement learning expected reward with an ``augmented" risk. This ``augmentation" is a distribution over label invariant data augmentations parameterized by a weight $w \in W$.   
    \iffalse This is the pushforward distribution:
\begin{equation}
    (\mathbf{a})_{\#}(P^{e}):=P^{e} \circ {\mathbf{a}}^{-1}, \mathbf{a} \sim \mathbb{A}_{h,w}
\end{equation}
\fi
    \begin{enumerate}
    \setcounter{enumi}{2}
        \item (Reward $\Longleftrightarrow$ Risk over the Augmentations from (2))
        
    The reward at state-action pair $(h,w)$ is the risk augmented by $\mathbb{A}_{w,e}$:
\begin{equation}
    r^e(h,w):=\mathbb{E}_{\mathbf{a} \sim \mathbb{A}_{w,e}}[R^e(h\circ \mathbf{a})]
\end{equation}
     \end{enumerate}
     The Value function is thus analogous to maximization of the weight $w \in W$:
\begin{enumerate}
\setcounter{enumi}{3}
\item (The Value function $\Longleftrightarrow$ Maximum $w$) 
\begin{equation}\label{eq: w-max}
w_{\max}:= 
\argmax_{w }\mathbb{E}_{\mathbf{a} \sim \mathbb{A}_{w,e}}[R^e(h\circ \mathbf{a})]
\end{equation}
   \end{enumerate}
   We obtain the following for the relationship between the $Q$-function and the data augmentations in deep learning.
        \begin{lemma}(The Risk-Reward Analogy) \label{lemma: q-analogy}
     
     Assume $\alpha=1$. The $Q$-function in our analogy to deep learning must have $n=1$. Thus: 
     \begin{equation}\label{eq: Q-analogy}
         Q_1(h,w_{\max})\gets r^e(h,w_{\max})
     \end{equation}
     In our analogy, the $Q$-function is memory-less and exploitative. Analogously, the  average risk from deep learning is used for pure exploration.
     \end{lemma}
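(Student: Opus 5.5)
The plan is to read the lemma as a consequence of instantiating the Bellman update from the $Q$-learning discussion with the dictionary set up in items (1)--(4). There the states are learners $h$, the actions are weights $w \in W$ parameterizing $\mathbb{A}_{w,e}$, and the reward is $r^e(h,w)=\mathbb{E}_{\mathbf{a}\sim\mathbb{A}_{w,e}}[R^e(h\circ\mathbf{a})]$. First, substitute $\alpha=1$ into the recurrence. This gives
\begin{equation*}
Q_n(h,w)\gets r_n(h,w)+\gamma\max_{w'\in W}Q_{n-1}(h',w'),
\end{equation*}
so the dependence on $Q_{n-1}(h,w)$ at the same state-action pair disappears. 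This is the ``memory-less'' part of the claim: nothing from the previous estimate at $(h,w)$ is retained.

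The second step is to argue that only $n=1$ is meaningful in the analogy, so that the bootstrapped term vanishes. The argument is that a single action $w$ does not itself produce a transition to a new learner. In the analogy, learner updates come from the separate exploitation (gradient descent) phase, not from the augmentation action. Conditioned on a fixed environment $e$ and a fixed learner $h$, each ascent step is therefore a one-step episode, with no successor state $t$ satisfying $p_t(h\xrightarrow{w}t)>0$ that lies inside the same episode. With the standard initialization $Q_0\equiv 0$, or equivalently by taking the empty episode history, the term $\gamma\max_{w'}Q_0(t,w')$ drops. We are left with $Q_1(h,w)\gets r^e(h,w)$. Evaluating at the maximizing action $w_{\max}$ from \eqref{eq: w-max}, which exists because $W$ is compact (assuming $r^e(h,\cdot)$ is continuous, or simply taking $\argmax$ as given), yields exactly \eqref{eq: Q-analogy}.

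Finally, I would interpret the result. Since $Q_1$ equals the immediate reward maximized over $w$, choosing $w_{\max}$ is greedy, which is the ``exploitative'' reading from the $Q$-side. Since this reward is the augmented risk, maximizing it over $w$ from the learner's side means searching for the hardest augmented environment, which is pure exploration of counterfactual environments.

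The main obstacle is making the claim that ``$n$ must equal $1$'' rigorous, because the analogy never specifies a transition kernel. I would handle it by explicitly fixing $p_t$ to be a terminal transition for augmentation actions (equivalently, setting $\gamma=0$ or $Q_0\equiv0$) and stating this as the modeling convention the lemma relies on.
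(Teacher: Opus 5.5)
The problem is the step that is supposed to force $n=1$. The rest of your argument agrees with the paper: using $\alpha=1$ to drop $Q_{n-1}(h,w)$, the existence of $w_{\max}$, and the exploitation/exploration reading. You are also more explicit than the paper about the bootstrap term $\gamma\max_{w'}Q_{n-1}(t,w')$, which the paper never mentions.

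You get $n=1$ by declaring augmentation actions terminal: ``a single action $w$ does not itself produce a transition to a new learner.'' That contradicts the dictionary you are instantiating. The paper states that the learners obtained through gradient updates correspond to states updated through actions. In Algorithm~\ref{alg: altSGD} the descent step on $\theta$ is taken on $J(w,\theta)$, which depends on the augmented graphs $G_w^{e,i}$, so $w$ does drive the transition to the next learner.

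Even granting your convention, it does not give $n=1$. A one-step episode bounds the length of an episode, not the episode number $n$. In your framing each ascent step at a fixed learner is its own episode, and $\theta$ is held fixed for $T$ ascent steps, so the episodes at that learner would be numbered $1,\dots,T$. What the terminal convention actually yields is $Q_n(h,w)\gets r_n(h,w)$ for every $n$. The memory-less conclusion survives, but ``must have $n=1$'' becomes an extra modeling assumption rather than a consequence. This is the weakness you flag yourself at the end.

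The missing idea is the one the paper uses. SGD is stochastic, so the sequence of learners it produces does not repeat. Each learner takes the single action $w_{\max}$, and the inner ascent loop is just the computation of the $\argmax$ in item (4), not a run of separate episodes. So every state is visited once and its $Q$-estimate is always at its first update, which is why $n=1$ is forced.

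The same observation also disposes of the bootstrap term without cutting transitions. The successor learner $t$ has never been visited, so $Q_0(t,\cdot)$ is still its initialization. Your $Q_0\equiv 0$ remark then gives $Q_1(h,w_{\max})\gets r^e(h,w_{\max})$. With that replacement, the rest of your write-up goes through.
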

     \begin{proof}
         In deep learning, we can assume that the sequence of learners formed by SGD do not repeat due to stochasticity. Thus, we can assume that in the analogous $Q$-learning case, we are always in episode $n=1$. 
         
         Equation \ref{eq: Q-analogy} follows by $\alpha=1$. This does not use past states and maximizes the reward at its current state. 
         Analogously, in deep learning there is \emph{maximization} over the risk. Thus, the data augmentations are exploring for the learning process. 
     \end{proof}
The physical meaning of the $\argmax$ in Equation \ref{eq: w-max} is to skew the original data distribution  $P^e$ toward a pushforward distribution $(\mathbf{a})_{\#}(P^e)$ representing a ``hard" counterfactual distribution, where we measure hardness by the distance from the ERM loss over the training. In this context, the easiest possible data augmentations are just those that can reproduce the ERM loss. 
%This is achieved by a distribution of data augmentations supported only on the identity transformations. %We would like all transformed training data ${S}(\mathbf{G}), \mathbf{G}\sim P_{tr}, S \in \mathcal{S}_g$ and not just an arbitrary sample of them, to be explored. 

In other words, $\mathbb{A}_{w_{\max},e}$ is a distribution of data augmentations for environment $e$ that maximizes this hardness metric. This prevents collapse to an ERM solution. 
\subsubsection{Adversarial Counterfactual Distributions}
It would be presumed that by maximizing this hardness metric the augmentations from $\mathbb{A}_{w_{\max},e}$ can act as adversarial label invariant data augmentations in distribution through the risk. We call this an \textbf{adversarial counterfactual distribution}:
\begin{equation}\label{eq: p-aug}
    P^{\textsf{aug}(e)}:=(\mathbf{a})_{\#}(P^e): \mathbf{a} \sim \mathbb{A}_{w_{\max},e}, e \in \mathcal{E}_{tr}
\end{equation}
\begin{lemma}\label{lemma: aug-e}
    $P^{\textsf{aug}(e)}$ exists for any $e \in \mathcal{E}_{tr}$.
\end{lemma}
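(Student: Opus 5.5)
The existence of $P^{\textsf{aug}(e)}$ comes down to whether the $\argmax$ in Equation \ref{eq: w-max} is attained. Once some $w_{\max}\in W$ exists, $\mathbb{A}_{w_{\max},e}$ is a well-defined distribution of label invariant data augmentations. Each $\mathbf{a}$ maps $\textsf{supp}(P^e)$ into itself, so its pushforward $(\mathbf{a})_{\#}(P^e)=P^e\circ\mathbf{a}^{-1}$ is a probability measure on the same covariate space. Mixing these pushforwards over $\mathbf{a}\sim\mathbb{A}_{w_{\max},e}$ then defines the distribution in Equation \ref{eq: p-aug}. My plan is therefore to prove the maximum exists and then check that the pushforward construction is measure-theoretically sound.

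\textbf{Step 1 (attainment of the maximum).} Fix $e\in\mathcal{E}_{tr}$ and a learner $h$, and set
\[
F(w):=\mathbb{E}_{\mathbf{a}\sim\mathbb{A}_{w,e}}[R^e(h\circ\mathbf{a})].
\]
Since $h$ is boolean and $l$ is a bounded classification loss on $\{0,1\}$, every $R^e(h\circ\mathbf{a})$ lies in a bounded interval. Hence $F$ is bounded and $\sup_{w\in W}F(w)$ is finite. We assumed $W$ is compact, so the extreme value theorem will give a maximizer provided $F$ is upper semicontinuous (continuity would suffice). I will argue this from the parameterization: if $w\mapsto\mathbb{A}_{w,e}$ is weakly continuous and $\mathbf{a}\mapsto R^e(h\circ\mathbf{a})$ is bounded and continuous, then $F$ is continuous.

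\textbf{Main obstacle.} The hard part of Step 1 is that the paper never states a continuity assumption on $w\mapsto\mathbb{A}_{w,e}$, and $h$ is boolean-valued, so $\mathbf{a}\mapsto R^e(h\circ\mathbf{a})$ need not be continuous. My first fallback is a discreteness argument. Graphs with at most $N_e$ nodes have finitely many adjacency matrices, and if the attribute supports are finite (as in the examples), there are only finitely many label invariant maps $\mathbf{a}$. Then $F$ is an expectation of a bounded function over a finite set, and it is continuous whenever the mixture weights depend continuously on $w$, which is the standard case for softmax or Gumbel-type parameterizations. The second fallback avoids continuity altogether. If the maximum is not attained, take a maximizing sequence $w_k$. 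By compactness of $W$, a subsequence converges to some $w^\ast$. Tightness of the family $\{\mathbb{A}_{w_k,e}\}$, which holds because all augmentations act on a fixed bounded support, gives a weak limit point. One then defines $w_{\max}$ through this limit distribution and uses upper semicontinuity of the expected 0--1 risk along the sequence; this is exactly the point where an assumption would have to be stated explicitly.

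\textbf{Step 2 (well-definedness of the pushforward).} For each label invariant $\mathbf{a}$, measurability of $\mathbf{a}$ on the (finite or Borel) graph space makes $(\mathbf{a})_{\#}(P^e)$ a probability measure supported in $\textsf{supp}(P^e)$. Averaging over $\mathbf{a}\sim\mathbb{A}_{w_{\max},e}$ gives the mixture
\[
P^{\textsf{aug}(e)}(B)=\int P^e(\mathbf{a}^{-1}(B))\,d\mathbb{A}_{w_{\max},e}(\mathbf{a}),
\]
which is again a probability measure by Fubini/Tonelli. Label invariance ($f(\mathbf{a}(X))=f(X)$) means the labels $\mathbf{Y}^e$ pair consistently with the augmented covariates, so $P^{\textsf{aug}(e)}$ is a legitimate joint data distribution. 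Since $e\in\mathcal{E}_{tr}$ was arbitrary, this completes the proof.
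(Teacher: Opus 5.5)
Your argument has the same skeleton as the paper's proof: obtain $w_{\max}$ from compactness of $W$, then use label invariance to see that the augmented covariates remain correctly labeled. The care, however, is placed differently.

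The paper's Step 1 simply asserts that the $\argmax$ in Equation~\ref{eq: w-max} exists because $W$ is compact. You are right that this also needs upper semicontinuity of your $F$, which the paper never states, so your Step 1 is the rigorous version of the paper's step. You can discharge that hypothesis more cleanly than with your fallbacks. In the mask parameterization of Algorithm~\ref{alg: altSGD}, each graph admits only finitely many masks, each sampled with a probability continuous in $w$ (a GNN followed by $\sigma$), and the loss is bounded. Hence $F$ is continuous by dominated convergence, with no finiteness assumption on attribute supports. Your second fallback should be dropped: a weak limit of $\mathbb{A}_{w_k,e}$ need not be $\mathbb{A}_{w^\ast,e}$, or any member of the family, so it cannot define $w_{\max}$.

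In the second half, the paper first argues that labels are a deterministic function of the covariates: two samples with equal covariates share a cause, and $m$ is deterministic. It then uses $f=f\circ\mathbf{a}$ to read $P^{\textsf{aug}(e)}$ as the data distribution of a new environment $\textsf{aug}(e)$ of the causal model. You instead take $f$ as given by the definition of label invariance, and check via Tonelli that the mixture of pushforwards is a genuine probability measure with consistent labels.

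Your route buys measure-theoretic well-definedness, which the paper skips. The paper's route buys the causal reading of $P^{\textsf{aug}(e)}$ as an additional environment, which is what Theorem~\ref{prop: RIA-bounds} later invokes. Your proof does not explicitly supply that reading, although the lemma as stated only asks for existence.
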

\begin{proof}
\textbf{1. }The distribution $\mathbb{A}_{w_{\max},e}$ is determined by Equation \ref{eq: w-max}. It exists since the space $W$ of weights for $\mathbb{A}_{\bullet,e}$ is compact. 

Let us simplify our data generation SCM to the causal path alone and denote $\textbf{Cause}$ for the causal variable(s) that by the map $m$ deterministically cause the label. 

\textbf{2. }Since this map is deterministic, the set of data samples $(x,y) \sim P^e$ form a deterministic map. 

Proof by contradiction:

Say $(x,y),(x,y')$ were a pair of covariate-label data pairs. These must have the same causation: $C$. Then by determinism of the map $m$, we must have $y=m(C)=y'$, contradiction.

\textbf{3. }
Because of the label invariance relation, we must have that $f=f\circ \mathbf{a}, \mathbf{a} \sim \mathbb{A}_{w_{\max},e}$. This means that the variable $\mathbf{a}(\mathbf{X}), \mathbf{a} \sim \mathbb{A}_{w_{\max},e}$ is caused by $\mathbf{X}$.

    Thus, we have an environment  $\textsf{aug}(e)$ that is the following chain: $(\mathbf{E}=e)\rightarrow \textbf{Cause}$. It generates the causal variable $\mathbf{X}$ and labels $\mathbf{Y}$ with $f\circ \mathbf{a}$.

    This can be summarized in the following diagram:
    \begin{equation}
        \begin{tikzcd}
	{\mathbf{E}=e} & {\textbf{Cause}} && \\
	{\mathbf{E}=\textsf{aug}(e)} & {\mathbf{X}} & {\mathbf{a}(\mathbf{X})} & {\mathbf{Y}}
	\arrow[from=1-1, to=1-2]
	\arrow[from=1-2, to=2-2]
	\arrow[curve={height=-12pt}, from=1-2, to=2-4]
	\arrow[from=2-1, to=2-2]
	\arrow[from=2-2, to=2-3]
	\arrow["f", curve={height=-12pt}, from=2-2, to=2-4]
	\arrow["{f\circ a}"', shift left, curve={height=12pt}, from=2-2, to=2-4]
	\arrow[from=2-3, to=2-4]
\end{tikzcd}
    \end{equation}
\end{proof}
The distribution $P^{\textsf{aug}(e)}$ can contain many instances where the output of the learner changes. This is not necessarily true over all instances, however.
\iffalse
This prevention is guaranteed since by definition of the $\mathbb{A}_{w_{\max},e}$ on $h_{ERM}$: 
  \begin{align}\label{eq: qmax-inequality}
\begin{split}
\mathbb{E}_{\mathbf{a} \sim \mathbb{A}_{w_{\max},e}(h_{ERM}, e,\mathcal{E}_{tr})}[R^e(h_{ERM}\circ \mathbf{a})]\geq R^e(h_{ERM}), \forall e \in \mathcal{E}_{tr}
\end{split}
\end{align}  
\fi
%This is achieved by maximizing a distribution on the data augmentations as in Definition \ref{def: G-Aug}. %By Proposition \ref{lemma: group-data-aug}, $\mathcal{S}_g=\cup_{n>0} S_g^n \subseteq\mathcal{I}_g$ when $S_g^n$ is a group action. An example of $\mathcal{S}_g$ is the set of edge additions/deletion data augmentations. 
%
 %where $d$ is a discrepancy measure such as mean squared error with $d(a,b)=0$ if $a=b$ and positive otherwise. We choose $d$ to be mean squared error since it is convex and thus has a unique minimum. 
  %Notice that we are choosing a skewed distribution on $\mathcal{S}_g$ to push the average regularizer value to a maximum for a given $\theta$.
 
% \begin{definition} \label{def:2}
%Let $P_{Aug}_{tr}(h) = P(({\mathbf{G}^e})'= \mathbf{a}(\mathbf{G}^e), \mathbf{Y}^e)$  satisfy $\mathbf{a} \sim \mathbb{A}_{w_{\max},e}(h,\{(\mathbf{G}^e,\mathbf{Y}^e)\}_{e \in \mathcal{E}_{tr}})$
%\end{definition}
%\textcolor{red}{
%That is, $P_{tr}_{Aug}(h)$ defines the perturbed data distribution that maximizes the ERM loss.
%}
\begin{figure}[!h]
\centering
\includegraphics[width=0.6\columnwidth]{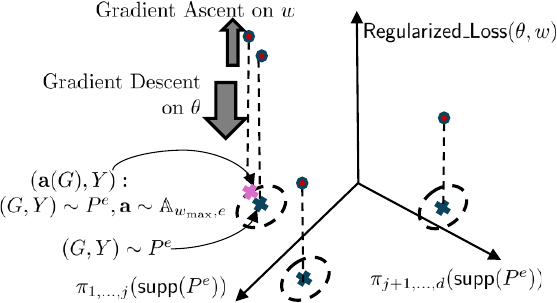}
		\caption{Geometric view of the minimax optimization procedure RIA algorithm on %$Reg_{\theta, w_k}= %\frac{1}{K} \sum_{k=1}^K
$\textsf{Regularized}\_\textsf{Loss}(\theta,w)$ as given in Equation \ref{eq: RIA} 
where $w \in W$ indexes the artificial search environments, $\theta$ indexes the learner's neural weights. The map $\pi_{j,...,k}: j<k$ is a projection of a set into $k-j+1$ independent dimensions. }
		\label{fig: geometric-optimization}
		\centering
	\end{figure}
\subsection{Regularization for Invariance with Adversarial Training: RIA}
We formulate the following minimax optimization problem called \textbf{Regularization for Invariance with Adversarial Training}: RIA. It uses the label invariance of existing causal learning methods with adversarial training. The data augmentations form an \textbf{adversarial counterfactual distribution} as in Equation \ref{eq: p-aug}. %with perturbed data distribution $P_{tr}_{Aug}(h)$. 
\begin{gather}\label{eq: RIA}
\begin{split}
\textsf{RIA}(\mathcal{E}_{tr})_{\bullet}\triangleq \min_{h\in \gH}  
\mathbb{E}_{((\mathbf{G}^e)',\mathbf{Y}^e)\sim P^{\textsf{aug}(e)}}[\lambda \cdot\textsf{OoD-Reg}_{\bullet}(h(({\mathbf{G}^e})'), \mathbf{Y}^e )
+l_e(h(({\mathbf{G}^e})'), \mathbf{Y}^e)] 
\\\text{where $P^{\textsf{aug}(e)}(h) = P[(\mathbf{a}(\mathbf{G}^e), \mathbf{Y}^e)]$  satisfies $\mathbf{a} \sim \mathbb{A}_{w_{\max},e}$}, \forall e \in \mathcal{E}_{tr}\text{, and }\lambda>0
\end{split}
\end{gather}
The subscript $\bullet$ indexes the constraints of some  OoD generalization method. 
\iffalse
\begin{gather}\label{eq: RIA}
h^*= \argmin_h 
%\mathbb{E}_{S \mathcal{S}_g}
\mathbb{E}_{ (G',Y) \sim P_{tr}_{Aug}} [l(h(G'), Y )] ,\\
\text{subject to }
\mathbb{E}_{(G',Y) \sim P_{tr}_{Aug}} [\max_{T  \in \mathcal{I}_g} d(h(G'), h(T(G'))]=0 \nonumber
\end{gather}
\fi
%\textcolor{red}{Please tell the audience what is OoD-Reg in (5). How is it relate to (4)? }

\textbf{Why Regularization? } 
In traditional OoD generalization methods, stabilization across environments imposes an invariance to a symmetry $\mathbf{a}$ over the data as a constraint for the learner $h$:
\begin{equation}
    h(\mathbf{a}(X))=h(X): X \sim P^e, e \in \mathcal{E}_{tr}
\end{equation}
This, however, prevents the data augmentation from being adversarial. Thus, in order to break the symmetry, we loosen this constraint and view the OoD generalization method through regularization.

We denote the regularization provided by existing OoD generalization methods by $\textsf{OoD-Reg}_{\bullet}(h)$.   The regularization maintains the original goal of stabilization across environments and extrapolation to an OoD test dataset.  If there is  ERM collapse, extrapolation cannot occur. The adversarially trained data augmentations help push the data away from ERM collapse.
\textcolor{black}{Intuitively, Equation~\ref{eq: RIA} aims to find the optimal OoD generalization classifier that minimizes the worst-case ERM loss, achieved via data augmentation. See Appendix Figure \ref{fig: ERMCollapse} for how this loss behaves during training and testing. 
%that is guaranteed not to converge to the ERM solution (you have NOT proved there is any guarantees)
}
\begin{theorem}(RIA can Escape ERM-Collapse)\label{prop: RIA-bounds}

When $\dagger$ is a constrained OoD generalization optimization problem with its risk denoted $\dagger(\mathcal{E}_{tr})$, we have:
    \begin{gather}
\begin{split}
\textsf{RIA}(\mathcal{E}_{tr})_{\dagger} \geq  {\dagger}(\mathcal{E}_{tr}) \geq \textsf{ERM}(\mathcal{E}_{tr})\geq 0
\end{split}
\end{gather}
Thus $\textsf{RIA}(\mathcal{E}_{tr})_{\dagger}$ can avoid ERM collapse.
\iffalse
We have 
\begin{equation}
    \textsf{OoD}(\gE_{all})= \textsf{RIA}(\gE_{tr})_{ERM} 
\end{equation}
if $\mathbb{A}_{w_{\max},e}$ solves for the data augmentation required by Corollary \ref{cor: OoD-by-advaug} for any learner $h$ during some minimization procedure.  
\fi
\end{theorem}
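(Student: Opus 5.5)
The plan is to prove the three inequalities from right to left, each by comparing feasible sets or objectives of the minimization problems. Then I will read off the ``escape'' conclusion from the strictness that the adversarial maximization in Equation \ref{eq: w-max} can create.

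\textbf{Step 1: $\textsf{ERM}(\mathcal{E}_{tr})\geq 0$.} The loss $l$ is nonnegative, so every $R^e(h)\geq 0$. Hence the average in Equation \ref{eq: ERMoptproblem} is nonnegative for every $h$, and so is its minimum.

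\textbf{Step 2: $\dagger(\mathcal{E}_{tr})\geq \textsf{ERM}(\mathcal{E}_{tr})$.} A constrained OoD method $\dagger$ (IRM is the model case) minimizes an objective that is, up to the normalization $1/\lvert\mathcal{E}_{tr}\rvert$, the ERM objective. It does so over a feasible set cut out by its constraints, for instance $w\in\argmin_{\bar w}R^e(\bar w\circ\Phi)$ for every $e$. This feasible set is a subset of $\gH$. Minimizing the same objective over a smaller set cannot give a smaller value, so $\dagger(\mathcal{E}_{tr})\geq\textsf{ERM}(\mathcal{E}_{tr})$. I will fix the normalization convention so that the two objectives literally coincide. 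If instead $\dagger$ is written in penalized form $\textsf{ERM}+\lambda\,\textsf{OoD-Reg}_\dagger$ with a nonnegative regularizer, the same inequality holds termwise.

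\textbf{Step 3: $\textsf{RIA}(\mathcal{E}_{tr})_{\dagger}\geq\dagger(\mathcal{E}_{tr})$.} This is the main step. Fix $h$. The key observation is that the identity map is a label invariant augmentation, and I will assume it is realizable by some weight $w_0\in W$, i.e.\ $\mathbb{A}_{w_0,e}=\delta_{\textsf{id}}$. Then by the definition of $w_{\max}$ in Equation \ref{eq: w-max},
\[
\mathbb{E}_{\mathbf{a}\sim\mathbb{A}_{w_{\max},e}}[R^e(h\circ\mathbf{a})]\geq r^e(h,w_0)=R^e(h).
\]
Lemma \ref{lemma: aug-e} guarantees that $P^{\textsf{aug}(e)}$ exists, and by label invariance the labels are unchanged. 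So the loss term of Equation \ref{eq: RIA} dominates the unaugmented risk for every $h$. I will apply the same sup-dominates-identity argument to the $\lambda\,\textsf{OoD-Reg}_\dagger$ term, viewing it as the penalty form of $\dagger$'s constraint, which vanishes on $\dagger$-feasible learners and is nonnegative otherwise. Together these give that the RIA objective at $h$ is at least the $\dagger$ penalized objective at $h$. Taking the minimum over $h$ on both sides yields the inequality.

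The obstacle is exactly this relaxation. RIA replaces hard constraints with a penalty, so a learner that is infeasible for $\dagger$ could in principle have a smaller RIA value. I would handle this by taking $\lambda$ large enough (exact-penalty argument), or by reading $\dagger(\mathcal{E}_{tr})$ as its regularized version, as the paper's \textbf{Why Regularization?} paragraph suggests.

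\textbf{Conclusion.} ERM collapse means the learned $h$ attains the ERM value, which is zero risk in the sense of Proposition \ref{prop: ERM-collapse-sufficient}, Case 3. Since the RIA value upper bounds the ERM value, RIA is not forced to that value. Whenever an adversarial label invariant augmentation exists for $h_{ERM}$, the maximization makes the inequality in Step 3 strict at $h_{ERM}$. So $h_{ERM}$ is no longer a zero-value solution of RIA, which is the sense in which RIA ``can'' escape collapse.
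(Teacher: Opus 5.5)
Your Steps 1 and 2 and the core of Step 3 use the same ingredients as the paper: the identity is an admissible label-invariant augmentation, so the risk at $w_{\max}$ dominates the unaugmented risk. Your version is better organized, because you prove the domination for every $h$ before minimizing. The paper only checks $\mathbb{E}_{\mathbf{a}\sim\mathbb{A}_{w_{\max},e}}[R^e(h_{\dagger}\circ\mathbf{a})]\geq R^e(h_{\dagger})$ at the single learner $h_\dagger$, and that cannot bound the minimum of the RIA objective from below. However, Step 3 has a genuine gap exactly where you flagged it, and neither of your repairs closes it.

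\textbf{The regularizer is not dominated.} The weight $w_{\max}$ in Equation~\ref{eq: w-max}, like the ascent step on $E(w,\theta)$ in Algorithm~\ref{alg: altSGD}, maximizes only the augmented \emph{risk}. So ``the same sup-dominates-identity argument'' says nothing about $\lambda\,\textsf{OoD-Reg}_{\dagger}$. The risk-maximizing augmentation can lower the IRMv1 gradient penalty, since misclassified augmented points contribute scale-gradients of the opposite sign. It can likewise lower RICE's invariance gap below its value on $P^e$. In either case the RIA objective at $h$ can fall below the penalized $\dagger$ objective at $h$.

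\textbf{Exact penalty does not rescue the hard-constraint reading.} Here $\lambda>0$ is fixed in Equation~\ref{eq: RIA}, and squared penalties such as IRMv1's are not exact for finite $\lambda$ in general. Moreover, under your own assumption that the penalty vanishes on $\dagger$-feasible learners, take $W$ to realize only the identity. Then $\textsf{RIA}(\mathcal{E}_{tr})_{\dagger}=\min_h\bigl[\text{ERM objective}+\lambda\,\textsf{OoD-Reg}_{\dagger}\bigr]\leq\dagger(\mathcal{E}_{tr})$. The claimed inequality would therefore force equality, which is exactly exactness of the penalty.

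\textbf{What is missing, and what survives.} The middle link $\textsf{RIA}(\mathcal{E}_{tr})_{\dagger}\geq\dagger(\mathcal{E}_{tr})$ needs an extra hypothesis that neither you nor the paper supplies. First, read $\dagger(\mathcal{E}_{tr})$ as the penalized problem; the paper's equality case $\textsf{supp}(\mathbb{A}_{w_{\max},e})=\{\textsf{id}\}$ tacitly does this. Second, assume the regularizer does not decrease under $\mathbb{A}_{w_{\max},e}$. For example, let the adversary maximize the full objective $J(w,\theta)$, or use a regularizer monotone in the risk, such as the per-environment VREx term $\lvert\mathcal{E}_{tr}\rvert\,(R^e)^2$.

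What your argument does establish, given that the identity is realizable and $\textsf{OoD-Reg}_{\dagger}\geq 0$, is a pointwise chain: the RIA objective at $h$ is at least the augmented risk at $h$, which is at least the ERM objective at $h$. Hence $\textsf{RIA}(\mathcal{E}_{tr})_{\dagger}\geq\textsf{ERM}(\mathcal{E}_{tr})\geq 0$. Your escape argument uses only this outer chain plus strictness of the risk term at $h_{ERM}$. That strictness holds provided the adversarial label-invariant augmentation is realized by some $\mathbb{A}_{w,e}$ on a set of positive probability. So your conclusion survives even though the middle inequality does not, and it is a more direct route than the paper's appeal to the contrapositive of the ERM-collapse proposition.
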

\begin{proof}
For the left inequality, by the temporal update rule from the $Q$-learning analogy in Lemma \ref{lemma: q-analogy}, that the $\mathbf{a} \sim \mathbb{A}_{w_{max},e}$ is risk maximizing. Thus:
  \begin{align}\label{eq: qmax-inequality}
\begin{split}
\mathbb{E}_{\mathbf{a} \sim \mathbb{A}_{w_{\max},e}(h_{\dagger})}[R^e(h_{\dagger}\circ \mathbf{a})]\geq R^e(h_{\dagger}), \forall e \in \mathcal{E}_{tr}
\end{split}
\end{align}  
%The far left inequality follows since the feasible region of learners is constrained so the 
where the left hand side is over the distribution $P^{\textsf{aug}(e)}$ which exists by Lemma \ref{lemma: aug-e}

For equality, if we set $\textsf{supp}(\mathbb{A}_{w_{\max},e}(h))= \{\textsf{id}\}$ then the minimizer of $\textsf{RIA}(\mathcal{E}_{tr})_{\dagger}$ in that case is the minimizer of $\dagger(\mathcal{E}_{tr}).$

%Expanding the support of $\mathbb{A}_{w_{\max},e}(h,\mathcal{E}_{tr})$ increases the risk on the left hand side of Equation \ref{eq: RIA-LB}, which results in the RIA loss.

The second inequality follows because there is a constraint of joint minimization in $\dagger$ but no such constraint in ERM. 

The last inequality follows because the risks are all non-negative.

The conclusion follows by the inequalities and the escape from Condition (3) for ERM collapse. Thus, by the contrapositive of Proposition \ref{prop: ERM-collapse-sufficient}, there is atleast one other environment. This gives the learner a chance to escape from ERM collapse.
\end{proof}
\subsection{Algorithm}
To solve the minimax optimization equation posed in Equation \ref{eq: RIA}, we propose an alternating gradient descent-ascent algorithm, which is shown in Algorithm~\ref{alg: altSGD}. The adversarial label invariant data augmentations are black-box~\cite{guo2019simple, zhang2024expressive}. On the contrary, in a white-box adversarial data augmentation, there would have to be the computationally expensive differentiation of a combinatorial object such as a graph. 

The algorithm proceeds in the form of a deep learning algorithm. The outer loop iterates epochs over the data. For $T$ steps, we compute the following three phases:
\begin{enumerate}
    \item Over all environments in $\mathcal{E}_{tr}$, a  random mask augmentation is computed over the graph. It is applied to the data.
    \item For all $T$ steps, the distribution $\mathbb{A}_{w,\bullet}$ is updated on parameter $w$ with (stochastic) gradient ascent.
    \item In one of the $T$ steps, the leaner $h_{\theta}$ is updated on parameter $\theta$ with (stochastic) gradient descent.
\end{enumerate}
In the algorithm, the GNN $f_{w}$, with neural weights $w$, determines a tensor of Bernoulli probabilities for which an adversarial data augmentation with $k$ entries is sampled. The GNN $h_{\theta}$ is some graph representation learner parameterized by $\theta$.

A geometric view of the optimization algorithm is shown in Figure \ref{fig: geometric-optimization}. In our implementation, we learn a distribution of node attribute masking data augmentations to prevent ERM collapse.

\textbf{The Choice of Data Augmentation: }
We chose a mask to augment the training data. The only requirement of RIA is that the data augmentation be label invariant. 

In Algorithm \ref{alg: altSGD}, the mask only applies to the node signal. Thus, it is \emph{spurious} to a ground truth graph classification that only depends on the graph topology. This is true, for example, in the datasets \textsc{CMNIST} and \textsc{Motif}. This means that the ground truth label over the graph topology is unaffected. Thus, our data augmentation is label invariant.

\begin{algorithm}[!h]
%\begin{algorithmic}
 \SetKwComment{Comment}{//}{}
\KwData{Training graph data $(G^e_i = (X^e_i, A^e_i)$,$Y^e_i$), $G^e_i \in D^e: D^e \sim (P^{e})^{n_e}$, $e \in \mathcal{E}_{tr}, i=1,...,n_e$; $n_e$ the number of training data for environment $e$. %$\mathcal{I}_{edge}^{inv}(\mathcal{G}_{X,A})$: the set of all edge addition/deletion data augmentations, 
Parameters of minimizing/maximizing GNN: $\theta$/$w$, Learning rates $lr_{\theta}$, $lr_{w}$, $k$: Number of entries of $X^e_i$ to keep, $\textbf{OoD-Reg}_{\bullet}$ is an OoD generalization regularizer from some existing method. $T$ is the ratio of num. maximization to num. minimization steps}

\While{not converged or max epochs not reached}{
\For{$t = 1,...,T$}{
\For{$e = 1,...,\lvert \mathcal{E}_{tr}\rvert $}{
%\For{$k = 1...K$}{
 $M_{w}^{e,i} \gets  s(\sigma((f_{w}(X^e_i
, A^e_i)))$; for $i=1,...,n_e$ \Comment{$f_{w}$ is a GNN; $s$ is a 0-1 sampler from a tensor of Bernoulli probs., sampling $k$ times to update a tensor of $0$'s.} %that sets the top k activations to 0 and all other entries to 1. 
 $G_{w}^{e,i} \gets ({M^{e,i}_{w}} \odot X^e_i,A^e_i)$
}
% $(X_e^k,A_e^k)\sim \mathbb{Q}_{w}(X^e, A^e), k = 1...K, e= 1...E_{tr}$
 $E(w,\theta) \gets \frac{1}{\lvert \mathcal{E}_{tr}\rvert }\sum_{e=1}^{\lvert \mathcal{E}_{tr}\rvert }\frac{1}{n_e} \sum_{i=1}^{n_e}[l_e(h_{\theta}, G^{e,i}_w, Y^e_i)]$
 
 $J(w,\theta) \gets$ 
$\frac{1}{\lvert \mathcal{E}_{tr}\rvert}\sum_{e=1}^{\lvert  \mathcal{E}_{tr}\rvert }\frac{1}{n_e}\sum_{i=1}^{n_e}[\textbf{OoD-Reg}_{\bullet}(h_{\theta},G_w^{e,i}, {Y}^e_i)]+E(w,\theta)$
 %\frac{\alpha}{P^{tr}_n}\sum_{i=1}^{n} \max_{T \in \mathcal{I}_{edge}^{inv}(\mathcal{G}_{X,A})} |( h_{\theta} \circ T(G_{w}^i)- h_{\theta}(G_{w}^i)|_2+ \frac{\beta}{E_{tr}} \sum_{i=1}^{n} l(h_{\theta}(G_{w}^i), Y^i)$
 %\\ 
 %   \Indpp{$+ \frac{1}{E_{tr}}\sum_{e=1}^{E_{tr}} \max_{T \in I_g} D( h_{\theta} \circ T(G_k^e), h_{\theta}(G_k^e))$}
 
  {Update $w \gets w +  lr_{w} \cdot \nabla_{w} E(w, \theta)$}%, k = 1...K$}
%}\ENDFOR

\If{t==T}{
 % loss $J_2(\theta) \gets \fracθ{\beta}{K \cdot E_{tr}} \sum_{k=1}^K \sum_{e=1}^{E_{tr}} l(h_{\theta}(G_k^e), Y)$
 %% $J_2(\theta) = {\beta} \cdot J_1(w, \theta)+\frac{\alpha}{K} \sum_{k=1}^K \sum_{e=1}^{E_{tr}} l(h_{\theta}(G_{w}^e), Y)$
%\frac{1}{K} \sum_{k=1}^K [ \frac{1}{E_{tr}}\sum_{e=1}^{E_{tr}} \sum_{T \in I_g} d( h_{\theta} \circ T(G_k^e), h_{\theta}(G_k^e))]$+\\
%\Indpp{$\frac{\beta}{K} \sum_{k=1}^K l(G_k^e, Y ; \theta)$ ; 
{Update $\theta \gets \theta - lr_{\theta} \cdot \nabla_{\theta} J(w,\theta)$ ;}
}
}
}
%\end{algorithmic}
\caption{\color{black}{RIA by Alternating (Stochastic) Gradient Ascent-Descent with Adversarial Data Augmentation for OoD Generalization on Graphs}}\label{alg: altSGD}
\end{algorithm}
\section{Experiments}\label{sec: experiments}
We ran all our experiments on a 64 core Intel(R) Xeon(R) CPUs @2.40 GHz with 128 GB DRAM equipped with one 40 GB DRAM Ampere A100 GPU. %For each dataset experiment, all runs completed within 2 days. 
The corresponding test scores for the best in-distribution validation score are averaged across $3$ runs for both real world and synthetic datasets. Hyperparameters follow the defaults of the GOOD benchmark~\cite{gui2022good}, see the Appendix.

We implement Algorithm \ref{alg: altSGD} (referred to as {\sc RIA} in 
Table~\ref{tab: allexps}) using the regularizations of RICE, IRM, VREx. 
%We also implement a variation where the adversarial data augmentations are given by an injective normalizing flow network of the form $\theta_{\xi}^X(X)= X \odot exp(s_{\xi_1}(X))+t_{\xi_2}(X)$ where $\xi= [\xi_1,\xi_2]$ as given by \cite{dinh2016density} on the node attributes $X$ of input data $(X,A)$. We call this approach {\sc ours-RICE-NF}, shown in Table \ref{tab: allexps}. %We perform this experiment because we surmise that $\mathcal{M}_{c_*}$ is made up of invertible maps. Notice that when there is no gradient ascent, $lr_w=0$ in Algorithm \ref{alg: altSGD}, then our algorithm is equivalent to RICE. This is how we implement RICE.
%
We compare our approach with the baselines of Coral~\cite{sun2016deep}, DANN~\cite{ganin2016domain}, DIR~\cite{wu2022discovering}, ERM~\cite{vapnik1999overview}, GSAT~\cite{miao2022interpretable}, GroupDRO~\cite{sagawa2019distributionally}, IRM~\cite{arjovsky2019invariant}, Mixup~\cite{wang2021mixup}, RICE~\cite{wang2022out}, VREx~\cite{krueger2021out}, EdgeDrop~\cite{Rong2020DropEdge:} all implemented in the GOOD~\cite{gui2022good} benchmark.

For the following datasets the graph data $G$ is split between signal $X$ and topology $A$. Since the signal is spurious for the graph classification task for our datasets, we naturally have a disentanglement between causal and spurious parts of the graph. This allows us to define causally invariant data augmentations on the data as perturbations on the signal $X$. This is one of the reasons why our theory is designed for graphs. Images do not have a natural tensor disentanglement such as between foreground and background without labels. 

\begin{table*}[!h]
\centering
\noindent
% Please add the following required packages to your document preamble:
% \usepackage{booktabs}
% Please add the following required packages to your document preamble:
% \usepackage{booktabs}
\resizebox{1\columnwidth}{!}{
\begin{tabular}
{@{}lll|ll|ll|ll|ll|ll|ll|}
  \scriptsize Dataset (acc) & \multicolumn{2}{c}{\scriptsize {\sc CMNIST }$\uparrow$}  & \multicolumn{2}{c}{\scriptsize {\sc SST2}$\uparrow$}   & \multicolumn{4}{c}{\scriptsize {\sc Motif} $\uparrow$}                            & \multicolumn{4}{c}{\scriptsize {\sc AMotif}$\uparrow$}  & \multicolumn{2}{c}{\scriptsize {\sc Synth} $\uparrow$}                         \\
\cmidrule(lr){2-3}
          \cmidrule(lr){4-5}
          \cmidrule(lr){6-9}
          \cmidrule(lr){10-13}
          \cmidrule(lr){14-15}
          \scriptsize covariate & \multicolumn{2}{c}{\scriptsize color}  & \multicolumn{2}{c}{\scriptsize length} & \multicolumn{2}{c}{\scriptsize basis} & \multicolumn{2}{c}{\scriptsize size} & \multicolumn{2}{c}{\scriptsize basis} & \multicolumn{2}{c}{\scriptsize size} & \multicolumn{2}{c}{\scriptsize basis+std, $r=1$}\\
          \cmidrule(lr){2-3}
          \cmidrule(lr){4-5}
          \cmidrule(lr){6-7}
          \cmidrule(lr){8-9}
          \cmidrule(lr){10-11}
          \cmidrule(lr){12-13}
          \cmidrule(lr){14-15}
          & \multicolumn{1}{c}{\tiny ID}                & \multicolumn{1}{c}{\tiny OOD}    & \multicolumn{1}{c}{\tiny ID}              & \multicolumn{1}{c}{\tiny OOD}     & \multicolumn{1}{c}{\tiny ID}               & \multicolumn{1}{c}{\tiny OOD}    & \multicolumn{1}{c}{\tiny ID}         & \multicolumn{1}{c}{\tiny OOD}         & \multicolumn{1}{c}{\tiny ID}               & \multicolumn{1}{c}{\tiny OOD}    & \multicolumn{1}{c}{\tiny ID}         & \multicolumn{1}{c}{\tiny OOD} & \multicolumn{1}{c}{\tiny ID}         & \multicolumn{1}{c}{\tiny OOD}    \\

  %\hline
\scriptsize {\sc \lightgrayarea{RIA-RICE}} & \tiny 61.7$\pm$1.6 &	\tiny \red{$48.1\pm 0.8$} &	\tiny 89.4$\pm$0.6 &	\tiny \red{$81.9 \pm 0.2$} &	\tiny \gray{$92.4\pm 0.2$} &	\tiny 65.1$\pm$5.9 &	\tiny \red{$92.4 \pm 0.2$} &	\tiny \red{$55.3\pm 0.4$} &	\tiny 79.3$\pm$1.6 &	\tiny 36.8$\pm$4.2 &	\tiny 67.4$\pm$1.5 &	\tiny 33.4$\pm$1.3 &	\tiny 48.0$\pm$9.0 &	\tiny 58.5$\pm$1.5 \\
%\hline
\scriptsize {\sc \lightgrayarea{RIA-IRM}} &
  \tiny {$65.5 \pm 2.8$} &
  \tiny \gray{$41.6 \pm 0.6$} &
  \tiny {$89.7 \pm 0.6$} &
  \tiny \gray{$81.7 \pm 0.5$} &
  \tiny {$33.7 \pm 0.8$} &
  \tiny {$33.9 \pm 0.7$} &
  \tiny {$33.5 \pm 0.8$} &
  \tiny {$34 \pm 2.9$} &
  \tiny \gray{$89.6 \pm 0.8$} &
  \tiny \gray{$40.5 \pm 3.8$} &
  \tiny {$48.6 \pm 0.4$} &
  \tiny \red{$48.6 \pm 2$} &
  \tiny {$51 \pm 0.6$} &
  \tiny {$54 \pm 0.8$} \\
  %\hline
\scriptsize {\sc \lightgrayarea{RIA-VREx}} &
  \tiny \red{$79.3 \pm 0.7$} &
  \tiny {$38.7 \pm 0.7$} &
  \tiny \gray{$89.8 \pm 2$} &
  \tiny {$80.2 \pm 4$} &
  \tiny {$32.2 \pm 2.3$} &
  \tiny {$34 \pm 1.7$} &
  \tiny {$33.5 \pm 0.5$} &
  \tiny {$34 \pm 1.0$} &
  \tiny \red{$90.5 \pm 4.5$} &
  \tiny \red{$42.4 \pm 0.6$} &
  \tiny \red{$90.3 \pm 0.9$} &
  \tiny \gray{$47 \pm 0.87$} &
  \tiny {$40 \pm 0.8$} &
  \tiny \gray{$60 \pm 1.9$} \\
  %\hline
%\scriptsize {\sc \lightgrayarea{ours-RICE-NF}}   & \tiny 20.3$\pm$0.3 &	\tiny 21.0$\pm$0.0 &	\tiny 66.2$\pm$1.5 &	\tiny 62.7$\pm$5.9 &	\tiny 92.4$\pm$0.2 &	\tiny 65.1$\pm$5.9 &	\tiny \gray{$92.3\pm 0.1$} &	\tiny \gray{$55.2\pm 0.3$} &	\tiny {82.9$\pm$7.9} &	\tiny \red{$44.4 \pm 5.2$} &	\tiny \gray{$85.0\pm 0.3$} &	\tiny {36.8$\pm$0.1} &	\tiny 52.5$\pm$0.5 &	\tiny {61.0$\pm$1.0}    \\
\hline
\scriptsize {\sc ERM}       &  \tiny {77.5$\pm$0.5} &	\tiny 28.3$\pm$0.3 &	\tiny {89.4$\pm$0.4} &	\tiny 81.2$\pm$0.2 &	\tiny 92.3$\pm$0.3 &	\tiny 68.3$\pm$0.3 &	\tiny 92.1$\pm$0.1 &	\tiny 51.4$\pm$0.4 &	\tiny 80.8$\pm$1.1 &	\tiny 33.2$\pm$1.0 &	\tiny 67.9$\pm$2.2 &	\tiny 33.2$\pm$1.0 &	\tiny 53.5$\pm$1.5 &	\tiny 53.5$\pm$1.5    \\
\scriptsize { \sc {DIR}} &
  \tiny {$39 \pm 2.9$} &
  \tiny {$28.1 \pm 10$} &
  \tiny {$83.6 \pm 4.6$} &
  \tiny {$81.1 \pm 4.9$} &
  \tiny {$82.2 \pm 5.2$} &
  \tiny \red{$73.6 \pm 5.8$} &
  \tiny {$75.6 \pm 3.9$} &
  \tiny {$39.3 \pm 1$} &
  \tiny {$34.7 \pm 2.5$} &
  \tiny {$35 \pm 2.9$} &
  \tiny {$36.3 \pm 5.2$} &
  \tiny {$33.1 \pm 3.3$} &
  \tiny {$48 \pm 1.2$} &
  \tiny \red{$61 \pm 1.4$} \\
\scriptsize  {\sc RICE} &  \tiny 68.2$\pm$0.9 &	\tiny 26.3$\pm$0.5 &	\tiny \red{$90.0\pm 0.2$} &	\tiny 80.7$\pm$0.7 &	\tiny 92.4$\pm$0.2 &	\tiny 65.1$\pm$5.9 &	\tiny \gray{92.2$\pm$0.0} &	\tiny \gray{55.1$\pm$0.2} &	\tiny 69.3$\pm$9.8 &	\tiny 36.2$\pm$1.7 &	\tiny 50.5$\pm$9.2 &	\tiny 33.5$\pm$1.2 &	\tiny 54.5$\pm$2.5 &	\tiny 54.0$\pm$1.0        \\
\scriptsize {\sc Coral}     &  \tiny \gray{$78.3 \pm 0.3$} &	\tiny 29.0$\pm$0.0 &	\tiny 89.3$\pm$0.3 &	\tiny 79.4$\pm$0.4 &	\tiny 92.3$\pm$0.3 &	\tiny 68.4$\pm$0.4 &	\tiny 92.1$\pm$0.1 &	\tiny 50.5$\pm$0.5 &	\tiny 81.0$\pm$0.2 &	\tiny 33.9$\pm$1.3 &	\tiny 67.9$\pm$0.6 &	\tiny 32.9$\pm$0.8 &	\tiny 54.0$\pm$2.0 &	\tiny 51.5$\pm$2.5    \\
\scriptsize {\sc DANN}      &   \tiny 77.5$\pm$0.5 &	\tiny 29.1$\pm$0.6 &	\tiny 89.3$\pm$0.8 &	\tiny 79.4$\pm$0.9 &	\tiny 92.3$\pm$0.8 &	\tiny 65.2$\pm$0.7 &	\tiny 92.1$\pm$0.6 &	\tiny 51.2$\pm$0.7 &	\tiny {81.1$\pm$0.2} &	\tiny 38.1$\pm$1.4 &	\tiny 69.2$\pm$1.1 &	\tiny 33.1$\pm$0.5 &	\tiny 54.5$\pm$1.8 &	\tiny 52.0$\pm$0.5      \\
\scriptsize {\sc GroupDRO}  &   \tiny 77.0$\pm$1.0 &	\tiny 28.5$\pm$0.5 &	\tiny 88.8$\pm$0.8 &	\tiny 80.7$\pm$0.7 &	\tiny 91.8$\pm$0.8 &	\tiny 67.6$\pm$0.6 &	\tiny 91.6$\pm$0.6 &	\tiny 51.0$\pm$1.0 &	\tiny 74.0$\pm$1.0 &	\tiny {38.6$\pm$0.6} &	\tiny \gray{83.9$\pm$0.8} &	\tiny {35.8$\pm$0.8} &	\tiny 50.5$\pm$0.5 &	\tiny 52.5$\pm$0.5     \\
\scriptsize {\sc GSAT}      &\tiny 67.0$\pm$2.6 &	\tiny {39.9$\pm$0.6} &	\tiny 89.0$\pm$0.1 &	\tiny 80.6$\pm$1.1 &	\tiny \red{$92.5 \pm 0.0$} &	\tiny 57.1$\pm$6.8 &	\tiny 92.1$\pm$0.1 &	\tiny 53.3$\pm$0.3 &	\tiny 69.3$\pm$9.8 &	\tiny 36.2$\pm$1.7 &	\tiny 50.5$\pm$9.2 &	\tiny 33.5$\pm$1.2 &	\tiny {58.5$\pm$7.5} &	\tiny 50.5$\pm$6.5  \\
\scriptsize {\sc IRM}       &  \tiny 77.0$\pm$1.0 &	\tiny 26.9$\pm$0.9 &	\tiny 88.7$\pm$0.7 &	\tiny 79.0$\pm$1.0 &	\tiny 91.8$\pm$0.8 &	\tiny {69.8$\pm$0.8} &	\tiny 91.6$\pm$0.6 &	\tiny 50.9$\pm$0.9 &	\tiny 79.0$\pm$1.0 &	\tiny 37.9$\pm$0.9 &	\tiny 79.6$\pm$0.6 &	\tiny 33.6$\pm$0.6 &	\tiny \red{$62.5 \pm 0.5$} &	\tiny 48.5$\pm$0.5   \\
\scriptsize {\sc Mixup}     & \tiny 76.7$\pm$0.7 &	\tiny 25.7$\pm$0.7 &	\tiny 88.9$\pm$0.9 &	\tiny 79.9$\pm$0.9 &	\tiny 91.8$\pm$0.8 &	\tiny 69.5$\pm$0.5 &	\tiny 91.5$\pm$0.5 &	\tiny 50.7$\pm$0.7 &	\tiny 70.9$\pm$0.9 &	\tiny 36.7$\pm$0.7 &	\tiny 68.7$\pm$0.7 &	\tiny 33.0$\pm$1.0 &	\tiny 41.5$\pm$0.5 &	\tiny {58.5$\pm$0.5}   \\
\scriptsize {\sc VREx}      &  \tiny 77.0$\pm$1.0 &	\tiny 27.7$\pm$0.7 &	\tiny 88.8$\pm$0.8 &	\tiny 79.8$\pm$0.8 &	\tiny 91.8$\pm$0.8 &	\tiny \gray{$70.7 \pm 0.7$} &	\tiny 91.6$\pm$0.6 &	\tiny 51.8$\pm$0.8 &	\tiny 78.6$\pm$0.6 &	\tiny 33.9$\pm$0.9 &	\tiny 65.6$\pm$0.6 &	\tiny 34.0$\pm$1.0 &	\tiny 50.5$\pm$0.5 &	\tiny 52.5$\pm$0.5 \\
\scriptsize{\sc DropEdge} & \tiny 56.9$\pm$0.9 &	\tiny 19.7$\pm$0.7 &	\tiny 88.8$\pm$0.8 &	\tiny {81.7$\pm$0.7} &	\tiny 34.7$\pm$0.7 &	\tiny 31.5$\pm$0.5 &	\tiny 34.8$\pm$0.8 &	\tiny 31.6$\pm$0.6 &	\tiny 37.9$\pm$0.9 &	\tiny 33.9$\pm$0.9 &	\tiny 33.8$\pm$0.8 &	\tiny 33.0$\pm$1.0 &	\tiny \gray{$59.5 \pm 0.5$} &	\tiny 43.5$\pm$0.5 
\end{tabular}}
\caption{Accuracy of all baseline approaches as well as \protect{\sc RIA-RICE}, \protect{\sc RIA-IRM}, \protect{\sc RIA-VREx} on all datasets under different covariate shifts. For each covariate shift, the columns labeled ID refer to the in-distribution test accuracies while the columns labeled OOD refer to the out-of-distribution test scores. \protect\cfbox{red}{Red} and \protect\cfbox{gray}{gray} entries are the max and second max test accuracies, respectively, for each column.}\label{tab: allexps}
\end{table*}

%\subsection{Modeling Causal Noise Synthetically}
\textbf{Additive Spurious Attributes Synthetic Dataset:} We develop a synthetic binary classification dataset that models a noisy  data generation process as in the SCM in Appendix Figure~\ref{fig: SEM}. For more information on the dataset, see Appendix, section \ref{sec: hyperparamsdatasets}. It is designed to model attribute shifts instead of just shifts in the graph topologies as in {\sc Motif}. %We assume there is a causal and spurious set of attributes $X_C,X_S$ that induce the input graph $G=(X,A)$, and there exists a map $c_*$ from the data back to the causal attributes $X_C$. The labels $Y$ are generated from a readout function on $(X_C,A)$. We also assume there is a map $c_{\xi}= c_* +s_{\xi}$ exists as an invertible map from $(X,A)$ to $X_C+X_S$. The causal graph $(X_C,A)$ can be obtained by subtracting $X_S$ from $X_C+X_S$. The joining operation, is thus now $(X,A):= (J_X(X_C,X_S), J_A(X_C,X_S))= c_{\xi}^{-1}(X_S+X_C)$. % to the additive combination of He initialization is defined by $\xi \sim \Xi= U(-\sqrt{\frac{1}{f}}, \sqrt{\frac{1}{f}})$, %the uniform distribution on the half open interval $[-\sqrt{\frac{1}{f}}, \sqrt{\frac{1}{f}})$, where $f$ is the number of attributes of each node. The random variable $\xi$ is independent to the covariates. 
%The covariate shift is modeled by using the basis structural covariate shift as given by {\sc Motif} and setting each attribute of $X_S$ to come from a normal distribution. % See Table \ref{tab: allexps} for all accuracy scores on the synthetic dataset. %The accuracy is high as expected since the data generation assumptions relate to our method.

%We perform graph classification for this dataset where the OoD generalization testing accuracy is shown in Table REF. 

%Figure \ref{fig: heatmap-rxnsxacac} shows a plot of the normalized OoD test accuracies of the synthetic dataset of our method. We notice that in general the smaller the value of causal radius $r$ from the data generating process, the less effective are data augmentations that affect the data more (more entries to zero).
%\subsubsection{Measuring the Causal Noise Radius $r$}

%{measure the $\varepsilon$ adversarial robustness value after training as a function of $r$}

\textbf{Real World Graph Classification Experiments:} We also perform experiments on real world benchmarks. For all the scores, see Table \ref{tab: allexps}. We use the datasets of {\sc CMNIST}~\cite{arjovsky2019invariant}, {\sc SST2}~\cite{JMLR:v22:21-0343}, and {\sc Motif}~\cite{wu2022discovering} from the GOOD framework as well as {\sc AMotif}, a modification of {\sc Motif}. Each of these datasets follows the causal model as shown in Appendix Figure \ref{fig: SEM}. Accuracy is used to measure the performance on all the datasets, as is standard. %Since {\sc Motif} has no node attributes, we create a variation of {\sc Motif} called {\sc AMotif} which has node attributes sampled from a normal distribution with various standard deviations. 
Each dataset involves different kinds of covariate shift. For more details about each dataset and the kind of covariate shift imposed on them, see the Appendix. 

As shown in Table~\ref{tab: allexps}, our method, RIA, performs well both in the in-distribution ID and out-of-distribution OoD settings.  For the ID case, 
RIA performs the {\color{red}{highest}} or {\color{gray}{second highest}} on all datasets in at least one method except for the synthetic dataset. This suggests that even in the ID setting, the data is never truly in-distribution. There is always some benefit to pushing away from the ERM solution. For the OoD case, the adversarial data augmentations seem able to counterfactually generate environments similar to the testing input data. This is the  benefit to minimax optimization. Of course there is no guarantee that RIA is converting the training distribution into the testing distribution exactly. However, the training distribution is no longer the same thing. RIA obtains the {\color{red}{highest}} or {\color{gray}{second highest}} score for every dataset except {\sc Motif} by at least one method. The performance on {\sc Motif} is not high since {\sc Motif} has very simple attributes.  The ablation comparison between each existing method: IRM, RICE, VREx, and RIA applied to it are included in Table~\ref{tab: allexps}. We see that RIA not only improves upon the existing method, but oftentimes outperforms many other baselines. 
%This is why we introduce the {\sc AMotif} dataset. %Since in the OoD case the solution cannot be an ERM solution, our method can guarantee any existing regularization based OoD method does not collapse to an ERM solution. 

%The validation and ID test losses are not shown. They are both decreasing as in the left of Figure \ref{fig: ERMCollapse}. Surprisingly, the ID test loss performs better for RIA. 

\subsection{Illustrating ERM Collapse}
In Figure \ref{fig: ERMCollapse}, we show the training and OoD testing losses across 150 epochs of training for ERM, IRM and VREx as well as RIA applied to IRM and VREx. We can see the ERM collapse phenomenon. SST2 does not have as much of a distribution shift so it is harder to observe ERM collapse. \textsc{CMNIST}has a synthetic distribution shift attached to a natural data distribution and only two very similar training environments so it is easier to observe ERM collapse. On \textsc{CMNIST}, VREx and IRM both follow the training loss curve of ERM since they must converge to zero training loss. RIA-VREx and RIA-IRM, on the other hand, are prevented from converging to zero loss. %On SST2 the variance regularizer makes the training more erratic. 
For OoD generalization for both SST2 and \textsc{CMNIST}, we see that by preventing ERM collapse, we can in fact maintain low OoD loss and prevent mimicking the behavior of ERM. The other methods, IRM and VREx, on the other hand, diverge like ERM.% and then come back down in loss but do not drop below their initialization.      

%The validation and ID test losses are not shown. They are both decreasing as in the left of Figure \ref{fig: ERMCollapse}. Surprisingly, the ID test loss performs better for RIA.  
\begin{figure}[h]%
\begin{subfigure}
    {{\includegraphics[width=0.5\textwidth]{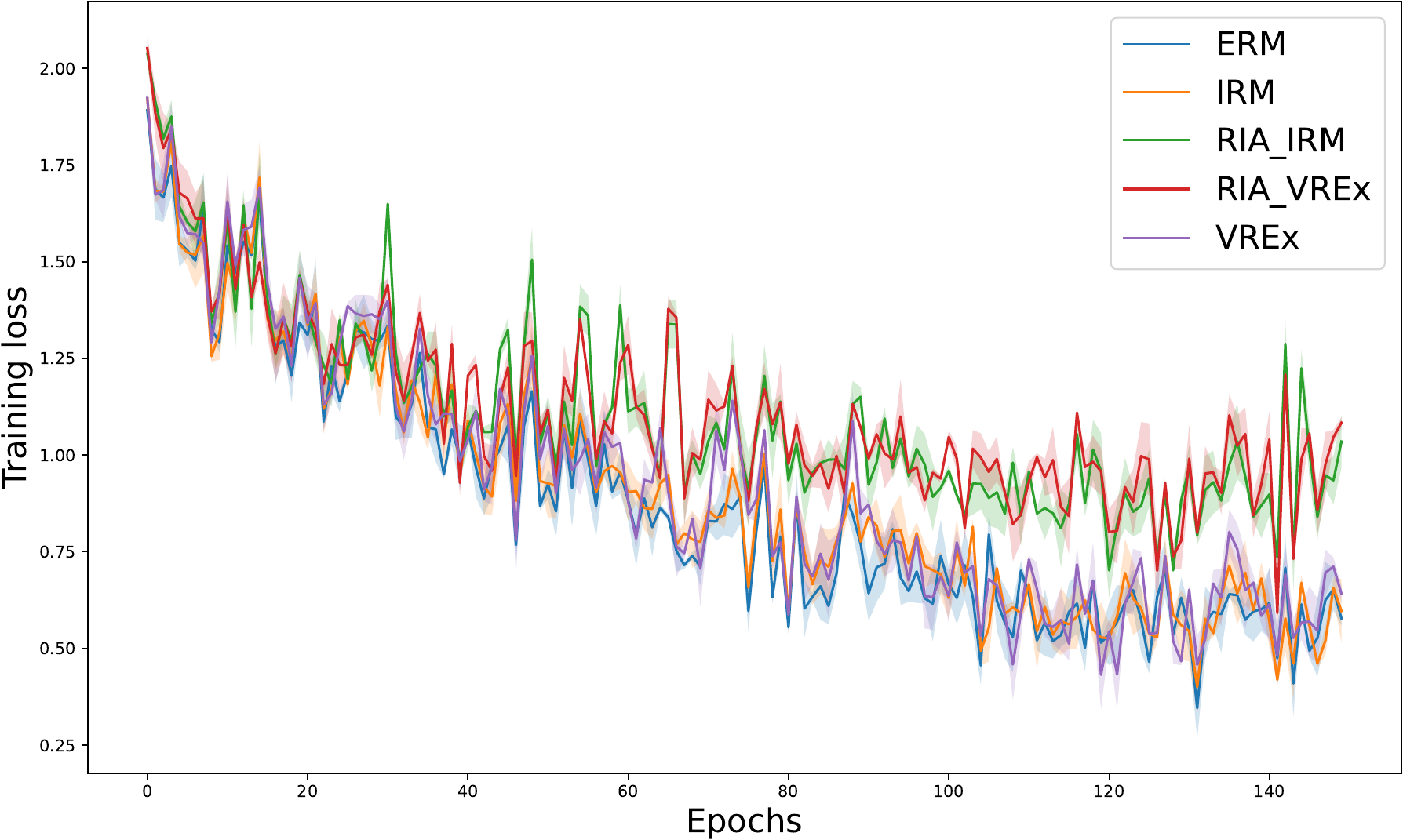} }}%
    \label{subfig: trainloss}
\end{subfigure}
    \begin{subfigure}
    {{\includegraphics[width=0.5\textwidth]{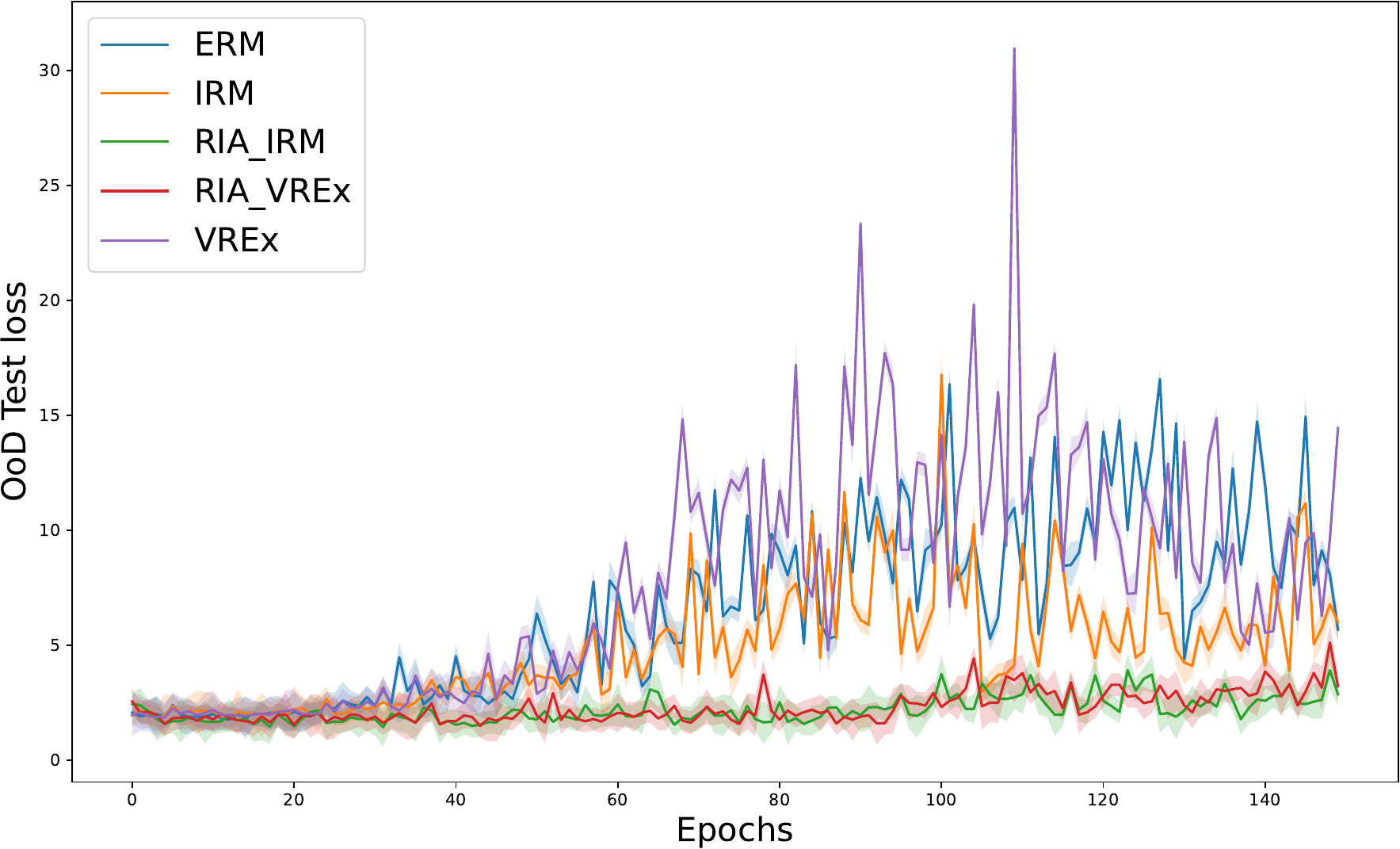} }}%
    \label{subfig: testOoDloss}
    \end{subfigure}
    \begin{subfigure}
    {{\includegraphics[width=0.5\textwidth]{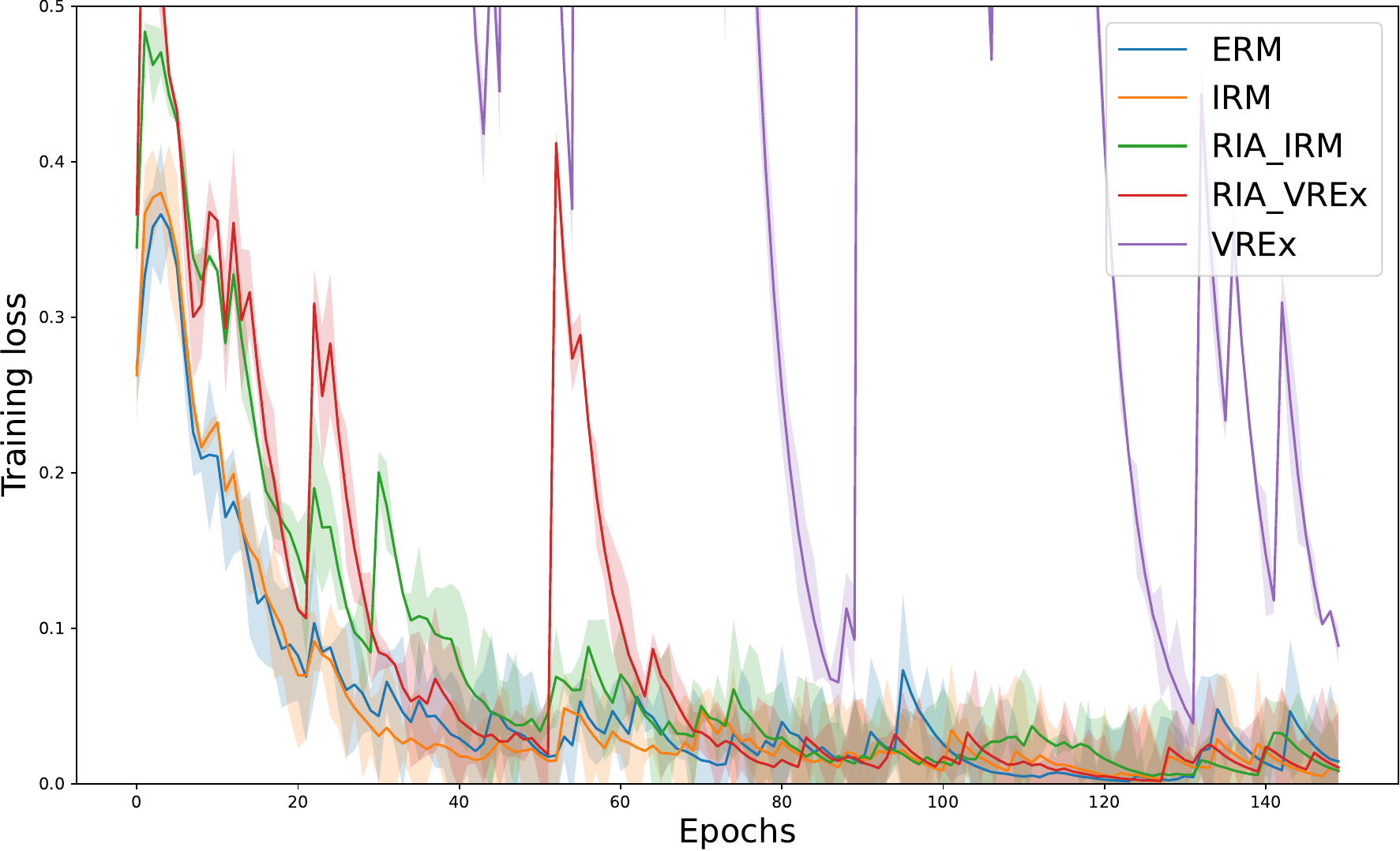} }}%
    \label{subfig: trainloss-SST2}
    \end{subfigure}
    \begin{subfigure}
    {{\includegraphics[width=0.5\textwidth]{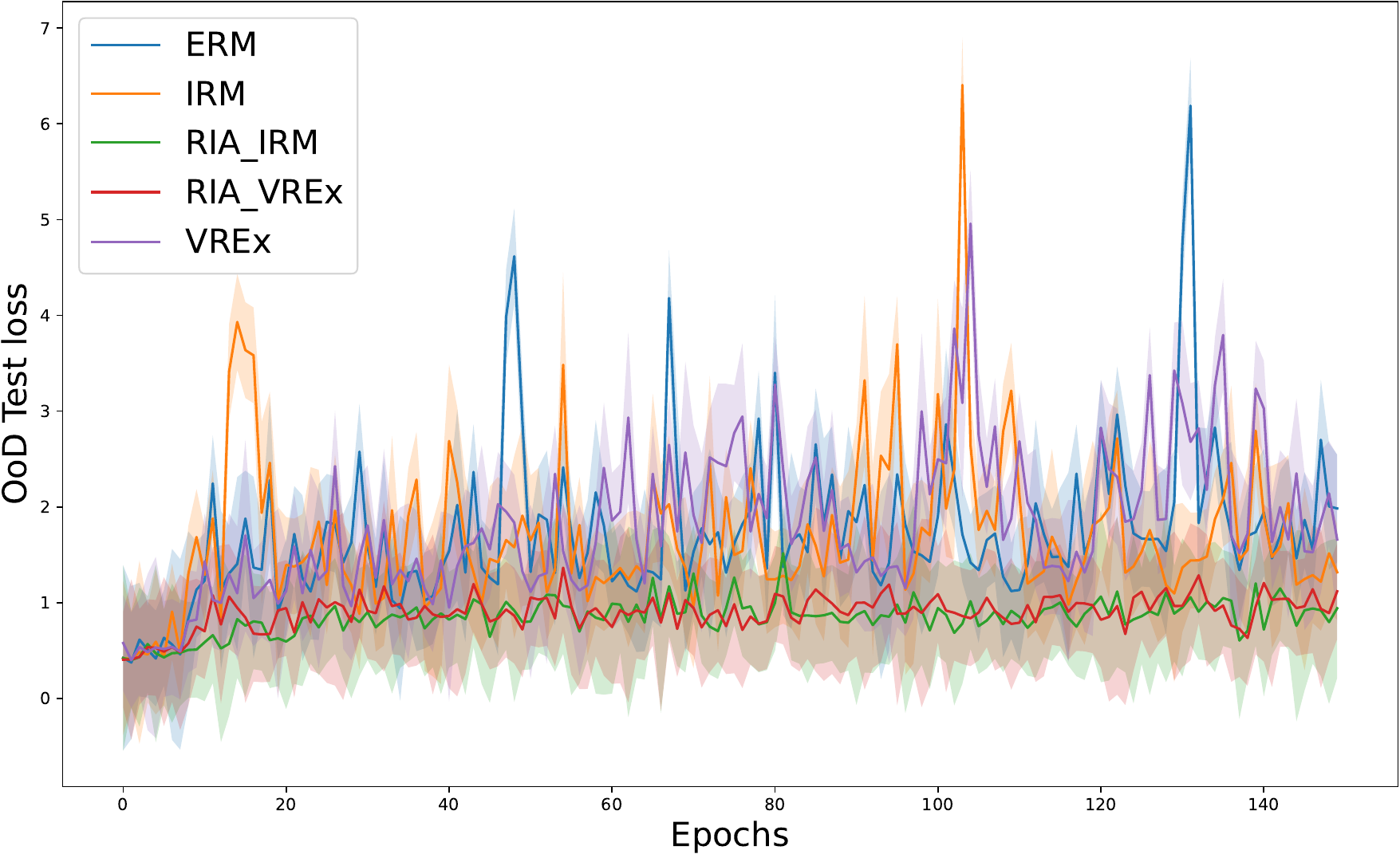} }}%
    \label{subfig: testOoDloss-SST2}
    \end{subfigure}
    \caption{Illustration of ERM Collapse on the \textsc{CMNIST}(above) and SST2 (below) dataset. Left: Training loss where ERM collapse is happening to traditional constrained optimization OoD generalization methods. Red and Green are RIA on IRM and VRex, respectively. Right: Test OoD loss. The consequences of ERM collapse are prevented. }%
    \label{fig: ERMCollapse}%
\end{figure}
\section{Discussion}
We observe widespread ERM collapse in existing methods in our experiments. Many of the methods such as IRM, VREx, Mixup and DropEdge behave very similar to ERM. We believe that these particular methods do not veer from ERM aggressively enough. IRM and VREx, may not have enough training environments. Mixup and DropEdge, as static data augmentations, are not actually changing the training distribution or achieving any kind of invariance across environments. RIA prevents ERM collapse and due to the adversarial generation of environments against the ERM loss the learner has enhanced robustness. 

Although we only did experiments on graph data, we believe RIA can easily be implemented for images and other data modalities. One caveat we have observed empirically is that the data augmentations should be diverse and only slightly affect the training distribution. Sudden changes to the training distribution can over-correct the learner.  
\section{Conclusion}
We have introduced adversarial data augmentations to provide a search for a robust OoD solution. We formulate and motivate the OoD problem as a minimax optimization problem over a set of environments. To address the lack of training environments and to prevent an early collapse of the classifier onto an ERM solution on the training distribution during OoD training, we propose RIA: Regularization for invariance with adversarial training. 
We compare our approach, RIA, with state of the art OoD generalization approaches including DIR\cite{wu2022discovering} and RICE \cite{wang2022out} as well as the classical ERM on graphs. This shows that for graph classification, preventing ERM collapse in the OoD setting improves existing OoD generalization methods. %We perform extensive experiments showing the effectiveness of our approach on OoD datasets and even in-distribution datasets. 

\section*{Acknowledgment}%ACKNOWLEDGMENT} 
This work was supported in part by the National Science Foundation under Grant OAC-$2310510$. 

\bibliography{bibliography}
\appendix
\clearpage
\section{The Regularizer for OoD Generalization Methods}
We have identified three OoD generalization methods that are formulated as constrained optimization problems: IRM, VREx, and RICE. We go over each method and how they can be rewritten as regularized ERM methods. Regularized ERM methods risk the possiblity of ERM collapse since their constraints may fail to be effective. 

Let $R^e(\bullet)$ denote the risk function over a given environment $e \in \mathcal{E}_{tr}$.

We will compute the regularizer for a constrained OoD generalization method $(\dagger)$. The regularizer is dependent on a learner $h_{\theta}$ and averaged over the an environment $e \in \mathcal{E}_{tr}$. This is denoted:
\begin{equation}
    \textbf{OoD-Reg}_{\dagger}(h_{\theta},\textsf{supp}(P^e))
\end{equation}

\textbf{IRM}: IRM is the following optimization problem:

\begin{gather}
    \begin{split}
        \min_{\Phi:X \rightarrow H,w: H\rightarrow Y} \sum_{e\in \mathcal{E}_{tr}} R^e(w \circ \Phi) \\
\text{ s.t. } w \in \argmin_{w:H \rightarrow Y} R^e( w \circ \Phi), \forall e \in \mathcal{E}_{tr}
    \end{split}
\end{gather}
This can be written as the following regularized ERM problem called IRMv1. 
\begin{equation}
    \min_{\Phi}\sum_{e \in \mathcal{E}_{tr}}(R^e(\Phi)+\lambda\cdot (\lvert \nabla_{\bar{w}} \mid_{\bar{w}=1.0} R^e(\bar{w} \cdot  \Phi)\rvert_2)^2)
\end{equation}
The regularizer is thus the following:
\begin{equation}\textbf{OoD-Reg}_{IRM}(h_{\Phi},\textsf{supp}(P^e)):= \lvert \mathcal{E}_{tr}\rvert \cdot (\lvert \nabla_{\bar{w}} \mid_{\bar{w}=1.0} R^e(\bar{w} \cdot  \Phi)\rvert_2)^2
\end{equation}
For graph learning, the map $\Phi$ can be implemented as a graph representation learner such as a GNN. The $w$ learnable scalar parameter just multiplies the representation before taking the cross entropy loss.

One can check that the causal model of Section \ref{sec: causalmodel} is still compatible with IRM.

\textbf{VREx}: VREx is the following optimization problem:

\begin{gather}
    \begin{split}
        \max_{
\sum_{e \in \mathcal{E}_{tr}} \lambda_e=1,
\lambda_e \geq \lambda_{min}}
\sum_{e \in \mathcal{E}_{tr}}
\lambda_e \cdot R^e(h_{\theta})=\\ (1 - \lvert \mathcal{E}_{tr}\rvert \cdot \lambda_{min}) \cdot \max_{e \in \mathcal{E}_{tr}} R^e(h_{\theta}) + \lambda_{min}  \cdot \sum_{e \in \mathcal{E}_{tr}}
R^e(h_{\theta})
    \end{split}
\end{gather}
This can be approximated as the following regularized ERM problem called VREx whose minimization gives a smoother version of the MM-REx constrained optimization problem:
\begin{equation}
\beta \cdot \text{Var}(\{R^1(h_{\theta}), ..., R^{\lvert \gE_{tr}\rvert }(h_{\theta})\}) + \sum_{e \in \mathcal{E}_{tr}}
R^e(h_{\theta})
\end{equation}
The regularizer for V-REx can then be defined as:
\begin{equation}
    \textbf{OoD-Reg}_{IRM}(h_{\theta},\textsf{supp}(P^e)):=\lvert \mathcal{E}_{tr}\rvert\cdot (R^e(h_{\theta}))^2
\end{equation}
The implementation for VREx on graphs should be straight forward since it is just a new regularized loss for a graph representation learner.

\textbf{RICE}: %RICE is the following optimization problem:
\iffalse
\begin{gather}
    \begin{split}
        H_s = \argmin
_{h}
\mathbb{E}_{P_s} [l(h(X), Y )]\\
\text{ s.t.} h(\cdot ) = (h \circ T )(\cdot), \forall T (\cdot) \in I_g
    \end{split}
\end{gather}
\fi
We describe here in full detail the implementation of RIA using the RICE regularizer and how RICE still fits the causal model we define in Section \ref{sec: causalmodel}. %We do not go over VREx or IRM since our causal model was derived from theirs.

Let the the support of a distribution be the subset of its domain where it has nonzero measure. This is denoted $\textsf{supp}(P):= \{x \in \textsf{dom}(P) \mid P(x) >0 \}$.

We need to define a \emph{single training environment} in order to use RICE. We can do this by taking a \emph{mixture} of training environments:
\begin{equation}
    P^{tr}:= \sum\limits_{e \in \mathcal{E}_{tr}: \sum_{e \in \mathcal{E}_{tr}} \lambda_e=1, \lambda_e\geq 0} \lambda_e \cdot P^e
\end{equation} 
We will use $P^{tr}$ to sample the training datasets $D_{tr}:=\sqcup_{e\in \mathcal{E}_{tr}}D^e$, $D^e \subset \textsf{supp}(P^e)$ for $e \in \mathcal{E}_{tr}$. 
%\end{definition}

RICE assumes a causal model. The causal model we define in Section \ref{sec: causalmodel} is compatible with the causal model of RICE. The causal model of RICE assumes that, given the data, the label is generated by the map $Y=m(c_*(X,A),\eta)$ where $\eta$ is an exogenous variable, $c_*$ coincides with the map we defined in Section \ref{sec: causalmodel} and $m$ is any label producing map. RICE is formulated as a constrained optimization problem:

\begin{subequations}
\begin{equation}
\min_{\theta} \mathbb{E}_{(G,Y) \sim P^{tr}}[l(h_{\theta}(G),Y)]
\end{equation}
\begin{equation}
\text{s.t. }h_{\theta} \circ T= h_{\theta}, \forall T \in \mathcal{I}_{c_*}(\textsf{supp}(P^{tr}))
\end{equation}
\end{subequations}
where %$supp(P)= \{x | P(x) \neq 0\}$ and 
$\mathcal{I}_{c_*}(\textsf{supp}(P^{tr}))$ is defined below: 
\begin{definition}\label{def: I_g}(Causal Essential Invariant Transformations)
\cite{wang2022out}

Let $\mathcal{G}$ be the set of all tensor pairs $(X,A)$ from $\textsf{supp}(P^{tr})$ with the graph isomorphism relation.
\begin{align}
\begin{split}
\mathcal{I}_{c_*}(\mathcal{G})= \{T_i\mid c_*(X_1,A_1)=c_*(X_2,A_2) \Rightarrow\\
\exists T_1...T_k \text{ with } c_*\circ T_i= c_* \forall i, \text{ s.t. }\\
T_1 \circ \cdots \circ T_k(X_1,A_1)=(X_2,A_2)\\
\text{ and }   
\forall (X_1,A_1), (X_2,A_2) \in \mathcal{G}\}%e, e \in \mathcal{E}_{tr}\}
\end{split}
\end{align}
We notice that a subset of the causal essential invariant transformations are just the invertible data augmentations which satisfy $c_* \circ T = c_*$. Implementing these data augmentations, such as edge addition and deletion on graphs, to approximate $\mathcal{I}_{c_*}(\mathcal{G})$ is simple and effective for graphs. We can thus narrow down the number of hyperparameters.
\end{definition}
\begin{proposition}\label{prop: I_ginvertible-appendix}
The $\mathcal{I}_{c_*}(\mathcal{G})$ of Definition \ref{def: I_g} contains the set $\mathcal{I}^{\text{inv}}_{c_*}(\mathcal{G})$ of invertible transformations on data support $\mathcal{G}$ that satisfy ${c_*}\circ T={c_*}$.
\end{proposition}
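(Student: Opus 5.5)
We need to show that every invertible $T$ on $\mathcal{G}$ with $c_*\circ T=c_*$ belongs to $\mathcal{I}_{c_*}(\mathcal{G})$. The plan is to read Definition \ref{def: I_g} as describing the largest set of transformations satisfying two conditions. The first is that each member preserves $c_*$. The second is a generating condition: any two points of $\mathcal{G}$ with the same $c_*$-value can be connected by a finite composition $T_1\circ\cdots\circ T_k$ of members. Under this reading, membership of a single transformation $T$ amounts to two checks. We must verify $c_*\circ T=c_*$, and we must show that adjoining $T$ to a family already satisfying the connectivity condition keeps that condition intact. Compositions of $c_*$-preserving maps are again $c_*$-preserving, so the second check is automatic once the first holds.

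\textbf{Step 1 (well-definedness on $\mathcal{G}$).} Since $T$ is invertible on $\mathcal{G}$, it maps $\mathcal{G}$ bijectively to itself. Hence $T$ and $T^{-1}$ are both legitimate transformations of the data support, and $T$ respects the isomorphism relation defining $\mathcal{G}$.

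\textbf{Step 2 (the fibres of $c_*$).} For any fibre $F_z=\{(X,A)\in\mathcal{G}: c_*(X,A)=z\}$, the hypothesis $c_*\circ T=c_*$ gives $T(F_z)\subseteq F_z$. Because $c_*\circ T^{-1}=c_*\circ T\circ T^{-1}=c_*$, the inverse also preserves $c_*$, so $T$ restricts to a bijection of each fibre. Every invertible $c_*$-preserving map is therefore a permutation of each fibre $F_z$.

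\textbf{Step 3 (realising the connecting chains).} Fix $(X_1,A_1),(X_2,A_2)$ in the same fibre. The swap $\tau$ exchanging these two points and fixing everything else is invertible and $c_*$-preserving, so it lies in $\mathcal{I}^{\text{inv}}_{c_*}(\mathcal{G})$. Taking $k=1$ and $T_1=\tau$ gives $T_1(X_1,A_1)=(X_2,A_2)$. Thus $\mathcal{I}^{\text{inv}}_{c_*}(\mathcal{G})$ by itself already satisfies the connectivity clause, and every element satisfies $c_*\circ T_i=c_*$. By the maximality reading of the definition, $\mathcal{I}^{\text{inv}}_{c_*}(\mathcal{G})\subseteq\mathcal{I}_{c_*}(\mathcal{G})$.

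The main obstacle is interpretive. Definition \ref{def: I_g} is stated loosely, and the inclusion depends on reading it as the set of all $c_*$-preserving maps that can appear in connecting chains, not as some minimal generating family. I would state this reading explicitly at the start of the proof. I would also remark that Step 3 is what makes the inclusion non-vacuous: the fibre transpositions are exactly the invertible augmentations the definition requires.
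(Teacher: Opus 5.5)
Your proof is correct under the reading you adopt, but it takes a different route from the paper.

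\textbf{The paper's route.} The paper treats $\mathcal{I}_{c_*}(\mathcal{G})$ as the set of maps that occur in some connecting chain. It first notes that the identities $I_{n_0}$ belong, since they connect two isomorphic representations, which have equal $c_*$-value. Then, given any connecting chain $T'$ for a pair in the same fibre, it pads the chain to $T^{-1}\circ T\circ T' = I_{n_0}\circ T'$. This makes $T$ and $T^{-1}$ literally appear in a connecting chain. Invertibility is used exactly there, to make the padding the identity. The paper also silently needs $c_*\circ T^{-1}=c_*$, which your Step 2 supplies.

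\textbf{Your route and what it buys.} You treat $\mathcal{I}_{c_*}(\mathcal{G})$ as the largest $c_*$-preserving connecting family. You then show, via fibre transpositions, that $\mathcal{I}^{\text{inv}}_{c_*}(\mathcal{G})$ is itself connecting. This gives a stronger and more useful fact: the invertible $c_*$-preserving maps alone form a causal essential set in the RICE sense. That, not mere inclusion in $\mathcal{I}_{c_*}(\mathcal{G})$, is what would justify restricting the RICE constraint to invertible augmentations. Your route also shows that, under your reading, invertibility is irrelevant to membership: any $c_*$-preserving map qualifies. The reason is that adjoining a map to a connecting family never destroys connectivity, since the old chains persist. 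That, rather than closure under composition, is what justifies your opening reduction.

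\textbf{One caution.} Your closing sentence describes the chain-membership reading the paper uses. Under that reading, Step 3 only places the transpositions in chains, not an arbitrary invertible $T$. You can cover it with one more line. Suppose $c_*(X_1,A_1)=c_*(X_2,A_2)$. By Step 2, $T(X_1,A_1)$ and $(X_2,A_2)$ lie in the same fibre. Let $\tau$ be their swap. Then $\tau\circ T$ is a connecting chain of $c_*$-preserving maps that contains $T$.
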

\begin{proof}
We show that if $T$ is invertible and satisfies ${c_*} \circ T= {c_*}$, then $T \in \mathcal{I}_{c_*}(\mathcal{G})$.

\textbf{1. }We first show that the identities 
$\{I_{n_0}\}_{n_0\leq N_e}$, which depend on the number of graph nodes $n_0$, is in $\mathcal{I}_{c_*}(\mathcal{G})$.

Let $(X_1,A_1)\approx(X_2,A_2)$ be two representations of two isomorphic graphs on $n_0$ nodes, then we have that ${c_*}(X_1,A_1)={c_*}(X_2,A_2)$ and that $I_{n_0}(X_1,A_1)=(X_2,A_2)$ for $I_{n_0}$ the identity on $(X_1,A_1)$.

\textbf{2. }
For any $(X_1,A_1), (X_2,A_2) \in S$,  ${c_*}(X_1,A_1)={c_*}(X_2,A_2)$ then there exists $T' \in \mathcal{I}_{c_*}(P)$ s.t. 
\begin{equation}
    I_{n_0} \circ T'(X_1,A_1)= T^{-1} \circ T \circ T'(X_1,A_1)= (X_2,A_2)
\end{equation}
This shows that both $T$ and $T^{-1}$ are in $\mathcal{I}_g(\mathcal{G})$ for all $T$ invertible over all graph sizes in the data support $\mathcal{G}$.

%The converse also holds. If $T \in \mathcal{I}_g(P)$ then it is invertible since if $g(X_1,A_1)=g(X_2,A_2)$ then there exists $T, T' \in \mathcal{I}_g(P)$ s.t. $ T(X_1,A_1)= (X_2,A_2)$ and $T'(X_2,A_2)=(X_1,A_1)$, which shows that $T'= T^{-1}$ by substitution. Thus every $T \in \mathcal{I}_g(P)$ has an inverse $T'$.
\end{proof}
\iffalse
The following lemma from \cite{wang2022out} will also be helpful to us:
\begin{lemma}\label{lemma: rice-lemma2}
\cite{wang2022out}
If for some function $h$, $\forall T \in \mathcal{I}_{c_*}(\mathcal{G}), h\circ T= h$ on $S$ where $S$ is some data support, then there exists a function $\phi$ s.t. $h= \phi\circ c_*$.
\end{lemma}
\fi

Proposition \ref{prop: I_ginvertible-appendix}, tells us that we may use the invertible transformations on graphs such as edge deletion/addition in the regularization term of RICE. This means we can implement a RICE regularizer for the OoD loss by the following OoD regularization term:

\begin{equation}
    \textbf{OoD-Reg}_{RICE}(h_{\theta},\textsf{supp}(P^{tr})) := 
 \frac{1}{\lvert \mathcal{E}_{tr}\rvert}\sum_{e=1}^{\lvert \mathcal{E}_{tr}\rvert }\mathbb{E} [\max_{T \in \mathcal{I}_{\text{edge}}^{\text{inv}}(\mathcal{G})} \lvert ( h_{\theta} \circ T(\mathbf{G}_{w}^e)- h_{\theta}(\mathbf{G}_{w}^e)\rvert_2]
\end{equation}
where $\mathbf{Y}^e$ is the label for environment $e \in \mathcal{E}_{tr}$, $\mathbf{G}_w^e$ are the adversarially augmented graphs for environment $e$ and $h_{\theta}$ is a graph representation learner.

\section{Hyperparameters and Dataset Information}\label{sec: hyperparamsdatasets}
\begin{table}[th]
\centering
%\resizebox{\columnwidth}{!}{
\begin{adjustbox}{width=.65\textwidth,center}
\begin{tabular}{cccccccc}
\hline \multicolumn{7}{c}{\scriptsize {Hyperparameters}}        \\ \hline
\scriptsize acc &
  \multicolumn{1}{c}{\scriptsize {\sc CMNIST} } &
  \multicolumn{1}{c}{\scriptsize {\sc SST2}} &
  \multicolumn{2}{c}{\scriptsize {\sc Motif}} &
  \multicolumn{2}{c}{\scriptsize {\sc AMotif}} &
  \multicolumn{1}{c}{\scriptsize {\sc Synth} } \\
  \cmidrule(lr){2-2}
          \cmidrule(lr){3-3}
          \cmidrule(lr){4-5}
          \cmidrule(lr){6-7}
          \cmidrule(lr){8-8}
\scriptsize covariate &
  \multicolumn{1}{c}{\scriptsize color} &
  \multicolumn{1}{c}{\scriptsize length} &
  \multicolumn{1}{c}{\scriptsize basis} &
  \multicolumn{1}{c}{\scriptsize size} &
  \multicolumn{1}{c}{\scriptsize basis} &
  \multicolumn{1}{c}{\scriptsize size} &
  \multicolumn{1}{c}{\scriptsize basis} \\
  \hline
\scriptsize lr               & \tiny 1e-3 & \tiny 1e-3 & \tiny 1e-3 & \tiny 1e-3 & \tiny 1e-3 & \tiny 1e-3 & \tiny 1e-3 \\
\scriptsize $lr_{adv}$         & \tiny 1e-4 & \tiny 1e-4 & \tiny 1e-4 & \tiny 1e-4 & \tiny 1e-4 & \tiny 1e-4 & \tiny 1e-4 \\
\scriptsize epochs           & \tiny 500  & \tiny 200  & \tiny 200  & \tiny 200  & \tiny 200  & \tiny 200  & \tiny 100  \\
\scriptsize num. edge augs.  & \tiny 10   & \tiny 10   & \tiny 10   & \tiny 10   & \tiny 10   & \tiny 10   & \tiny 10   \\
\scriptsize $k$              & \tiny 1    & \tiny 1    & \tiny 0    & \tiny 0    & \tiny 5    & \tiny 5    & \tiny 20   \\
\scriptsize arch             & \tiny GIN  & \tiny GIN  & \tiny GIN  & \tiny GIN  & \tiny GIN  & \tiny GIN  & \tiny GIN  \\
\scriptsize num layers       & \tiny 5    & \tiny 5    & \tiny 3    & \tiny 3    & \tiny 3    & \tiny 3    & \tiny 2    \\
\scriptsize $p_{edge}^{add}$ & \tiny 0.1  & \tiny 0.1  & \tiny 0.01 & \tiny 0.01 & \tiny 0.01 & \tiny 0.01 & \tiny 0.01 \\
\scriptsize $p_{edge}^{del}$ & \tiny 0.1  & \tiny 0.1  & \tiny 0.01 & \tiny 0.01 & \tiny 0.01 & \tiny 0.01 & \tiny 0.01
\end{tabular}
\end{adjustbox}
\caption{Superset of all hyper parameters shared across all datasets and shifts for all experiments.} 
\label{tab: hyperparameters}
\end{table}
We describe here some more information about each dataset we use in our experiments:

\begin{itemize}
    \item {\sc CMNIST} ~\cite{arjovsky2019invariant} Dataset is derived from the MNIST dataset from computer vision. It is curated by \cite{gui2022good}. Digits are colored according to their domains. Specifically, in covariate shift split, we color digits with $7$ different colors, and digits with the first $5$ colors, the $6$th color, and the $7$th color are categorized into training, validation, and test sets. 
    \item {\sc SST2}~\cite{socher2013recursive} Derived from a natural language sentiment classification dataset. Each sentence is transformed into a grammar tree graph, where each node represents a word with corresponding word embeddings as node features. The dataset forms a binary classification task to predict the sentiment polarity of a sentence. We select sentence lengths as domains since the length of a sentence should not affect the sentimental polarity.
    \item {\sc Motif}~\cite{wu2022discovering} Each graph in the dataset is generated by connecting a base graph and a motif, and the label is determined by the motif solely. Instead of combining the base-label spurious correlations and size covariate shift together as in~\cite{wu2022discovering}, the size and basis shifts are separated. Specifically, we generate graphs using five label irrelevant base graphs (wheel, tree, ladder, star, and path) and three label determining motifs (house, cycle, and crane). To create covariate splits, we select the base graph type and the size as domain features. There are no node attributes in this dataset.
    \item {\sc AMotif} (a modification of {\sc Motif} to have attributes) Taking the same graph structures from {\sc Motif}, we use node attributes of dimension $256$ all sampled from a $N(0,(e+1)^2)$, where $e$ is the environment index. Covariate shifts are achieved by changing the basis or size as in {\sc Motif} each shift indexed by some $e$.
    \item {\sc Synth} We construct a synthetic dataset as described in Section \ref{sec: experiments}. The dataset is a modification of {\sc Motif}, which generates data by a joining operation between causal and spurious graphs. In our construction, we construct $(X_C,A),(X_S,A)$ as in {\sc AMotif}. We let the joining operation be the map $(J_X(X_C,X_S),J_A(X_C,X_S))= c_{\xi}^{-1}(X_C+X_S,A)=(X,A)$ where $\xi$ are neural weights. We assume that the map $c_{\xi}$ is invertible and has an inverse $c_{\xi}^{-1}$ defined by a GIN neural network that maps from the graph $(X_C+X_S,A)$ to the graph $G=(X,A)$. GIN is not guaranteed to be injective, however it is a good enough approximation to one in practice. % additively by vector $\delta_{\xi} \sim N(0,\sigma)$. $N(0,\sigma(MLP(\tilde{e})))$, where $\tilde{e}$ is the one-hot encoding of the environment index and $\sigma \circ MLP $ is a fixed neural mapping to a number in $(0,1)$. 
    The label is defined by $Y=m(X_C,A)$ where $m$ is a MLP.  %and $\eta \sim  N(0,\sigma(MLP(\tilde{e})))$ where $\tilde{e}$ is a one-hot encoding of the environment index and $\sigma \circ MLP$ is a fixed neural mapping to a tensor of numbers in $(0,1)$. 
    We can further assume that $c_{*}$, the causal map, can be obtained by $c_*(X,A)= c_{\xi}(X,A)-s_{\xi}(X,A)$ where $c_*$ is deterministic and $\xi$ is initialized by $\xi \sim N(0,MLP(\tilde{e}))$. For the RIA-RICE implementation $c_*$ is assumed to exist and allows us to obtain a solution of the form $\phi\circ c_*$. For RIA-IRM and RIA-VREx, so long as our data generation process coincides with the model of \cite{arjovsky2019invariant} is satisfied, %Notice we assume here that for any $\delta$, there exists a data augmentation learnable by some neural network corresponding to the $\delta$. Theoretically this would require $c_*$ injective to use a universal approximation theorem. 
    The distribution shifts are induced by varying $\tilde{e}$ and thus affecting $\eta$ and $\alpha$ simultaneously. There are $4$ environments in $\mathcal{E}$. Two environments are combined together for training, the third  for validation, and the remaining environments are for testing.
\end{itemize}

We list in Appendix-Table \ref{tab: hyperparameters} the hyperparameters of our approachs on all datasets experimented with.  
\end{document}